\newcommand{\onedot}{.}
\def\eg{\emph{e.g}\onedot} 
\def\ie{\emph{i.e}\onedot} 
\def\cf{\emph{cf}\onedot} 
\def\etc{\emph{etc}\onedot} 
\def\iid{i.i.d\onedot}
\def\id{\mathcal{D}_{\text{ID}}}
\def\idimage{\mathcal{X}_{\text{ID}}}
\def\idlabel{\mathcal{Y}_{\text{ID}}}
\def\ood{\mathcal{D}_{\text{OOD}}}
\def\oodimage{\mathcal{X}_{\text{OOD}}}
\newcommand{\mtx}[1]{\bm{#1}}
\def\mb{\mathbb}
\newcommand{\defeq}{\vcentcolon=}
\newcommand{\dotp}[2]{\left\langle #1, #2\right\rangle}
\newcommand{\paren}[1]{\left(#1\right)}
\newcommand{\brkt}[1]{\left[#1\right]}
\newtheorem{assumption}{Assumption}
\newtheorem{theorem}{Theorem}
\newtheorem{theoremappendix}{Theorem}
\newtheorem*{remark}{Remark}
\begin{document}

\newcommand{\mytitle}{Revisiting Energy-Based Model for Out-of-Distribution Detection}

\title{\mytitle}

\author{Yifan Wu,~\IEEEmembership{Student Member,~IEEE},
        Xichen Ye,
        Songmin Dai,
        Dengye Pan, 
        Xiaoqiang Li,~\IEEEmembership{Member,~IEEE},\\
        Weizhong Zhang,
        and~Yifan Chen,~\IEEEmembership{Member,~IEEE}
\thanks{Yifan Wu and Xichen Ye are with the School of Computer Engineering and Science, Shanghai University, Shanghai, China,
and also with the School of Computer Science, Fudan University, Shanghai, China (e-mail: victorwu001219@gmail.com; yexichen0930@outlook.com).}
\thanks{Songmin Dai, Dengye Pan, and Xiaoqiang Li are with the School of Computer Engineering and Science, Shanghai University, Shanghai, China (e-mail: laodar@shu.edu.cn; pandy@shu.edu.cn; xqli@shu.edu.cn).}
\thanks{Weizhong Zhang is with the School of Data Science, Fudan University, Shanghai, China (e-mail: weizhongzhang@fudan.edu.cn).}
\thanks{Yifan Chen is with the Department of Computer Science, Hong Kong Baptist University, Hong Kong, China (e-mail: yifanc@hkbu.edu.hk).}
\thanks{Yifan Wu and Xichen Ye contributed equally to this work.}
\thanks{Corresponding authors: Xiaoqiang Li and Yifan Chen.}
}

\markboth{Journal of \LaTeX\ Class Files,~Vol.~14, No.~8, August~2021}%
{Shell \MakeLowercase{\textit{et al.}}: A Sample Article Using IEEEtran.cls for IEEE Journals}

\IEEEpubid{0000--0000/00\$00.00~\copyright~2021 IEEE}

\maketitle

\begin{abstract}
Out-of-distribution (OOD) detection is an essential approach to robustifying deep learning models, enabling them to identify inputs that fall outside of their trained distribution. 
Existing OOD detection methods usually depend on crafted data, such as specific outlier datasets or elaborate data augmentations; this characteristic is reasonable while the frequent mismatch between crafted data and OOD data limits model robustness and generalizability. 
In response to this issue, we introduce Outlier Exposure by Simple Transformations (OEST), a framework that enhances OOD detection by leveraging ``peripheral-distribution'' (PD) data;
specifically, PD data are samples generated through \emph{simple data transformations}, thus an efficient alternative to manually curated outliers.

We further adopt the energy-based models (EBMs) to study PD data.
We first recognize the ``energy barrier'' in OOD detection which characterizes the energy difference between in-distribution (ID) / OOD samples and eases the detection;
the in-between PD data are introduced to establish the energy barrier in training.
Furthermore, this energy barrier concept motivates a theoretically grounded energy-barrier loss to replace the classical energy-bounded loss, 
which leads to an improved paradigm, OEST*, and brings a more effective and theoretically sound separation between ID and OOD samples. 
We perform empirical validation to provide sanity checks of our proposal, and extensive experiments across various benchmarks demonstrate that OEST* achieves better or similar accuracy compared with state-of-the-art methods.
The source code of our method is available at: \href{https://github.com/victor-yifanwu/Outlier-Exposure-by-Simple-Transformations}{https://github.com/victor-yifanwu/Outlier-Exposure-by-Simple-Transformations}.

\end{abstract}

\begin{IEEEkeywords}
Out-of-distribution detection, Outlier exposure, Energy-based models, Data augmentation.
\end{IEEEkeywords}

\section{Introduction}
\label{sec: intro}

\IEEEPARstart{T}{he} predominant assumption in model training is that test data are drawn independently and identically distributed ($\iid$) from the same distribution as the training data. 
Such distribution alignment is generally known as in-distribution (ID). 
Although the ID assumption leads to simple formulation, 
it rarely holds in \emph{open-world} scenarios as distribution shifts inevitably exist between training and testing data. 
This discrepancy poses significant challenges to a few existing models\cite{outlierhighd01sigmod,outliersurvey04aireview,outlier05handbook,outlierprogress19ieee}. It is essential to recognize these deviations as outliers, namely out-of-distribution (OOD)  samples\cite{amodei2016concrete,hendrycks2017abaseline,dietterich2017steps,leike2017ai,smuha2019eu,shneiderman2020ethicsai,mohseni2021mlsafety,hendrycks2021unsolved,hendrycks2022xrisk}, instead of blindly categorizing unseen samples into known classes with high confidence\cite{nguyen2015deep,hein2019relu}. Due to its broad application scenarios (\eg , autonomous vehicles\cite{jung2021standardized} and medical tasks\cite{gaspar2011systematic,hauskrecht2013outlier}), a number of methodologies for out-of-distribution detection have been developed\cite{conf/nips/YangWZZDPWCLSDZ22:openood,zhang2023openood}.

An intuitive approach to alleviate the misclassification of unknown samples as known categories, is to incorporate a significant amount of auxiliary external data in training~\cite{hendrycks2019oe, yu2019mcd, liu2020energy, mohseni2020pseudolabel, outliermining21ecml, abstention21arxiv, backgroundsampling20cvpr, papadopoulos2021oecc, ming2022posterior, Zhang_2023_WACV};
the utilization of real outliers generally brings about better performance. 
However, naturally the similarity between the crafted and the real OOD samples makes a significant difference, as revealed in recent studies~\cite{liznerski2022exposing, NEURIPS2023_LearningOE}.
This turns into an issue particularly in specialized fields such as medical or industrial imaging, where data characteristics greatly differ from those found in public vision datasets and high-quality external datasets are scarce \cite{Ruff2020AUR}. 
Even worse, another challenge arises in practice that the external auxiliary OOD dataset tends to contain quite a few ID samples, necessitating either elaborate algorithms or tremendous human labor to filter these samples. 
These complications deteriorate the effectiveness of OOD detection.

\IEEEpubidadjcol

In response to these issues, we manage to eliminate the reliance on real outliers in OOD detection,
through exploiting \emph{simple transformations} over ID samples; 
we further recognize and term the transformed samples as \emph{peripheral-distribution} (PD) data.
We consider this PD samples as neither ID nor regularly OOD. 
This ambiguity arises from the versatility of the collection of simple transformations: 
samples augmented with certain transformations, \eg, rotation \cite{gidaris2018unsupervised}, remain semantically ID, 
whereas those augmented with sobel filtering \cite{kanopoulos1988design} are distinct from ID samples. 
Experimental observation in Figure~\ref{fig:clusted-features} further confirms PD data is an interpolation between ID and OOD samples.

In addition, we revisit energy-based OOD detection~\cite[EBO]{liu2020energy} and connect the aforementioned concept of PD data to the energy-based model~\cite[EBM]{lecun2006tutorial}. 
We aim to encourage higher energy for samples from peripheral-distribution while lowering that of ID samples;
we therefore create an ``energy barrier'' between the ID and the PD data (and thus the OOD data).
The energy barrier we propose is formulated in \Cref{ass:periphery}, and we verify its statistical benefits in \Cref{thm:periphery}. 

Combining all the pieces above, we propose a new training paradigm, Outlier Exposure by Simple Transformations (\textbf{OEST}), which is illustrated in Figure~\ref{fig:energy barrier}. 
This approach features a comprehensive use of numerous data augmentation techniques, including those previously deemed non-contributory in~\cite{tack2020csi}. 
Remarkably, OEST achieves outstanding performance in both near-OOD and far-OOD detection tasks, with solely an extra $10$-epoch tuning.
The main contributions of this paper are summarized as follows: 
\begin{itemize}[leftmargin=*]
\item We introduce peripheral-distribution (PD) data for OOD detection, which consists of samples augmented through various simple transformations. 
\item We revisit the energy-based model, and suggest the energy polarization of ID and OOD samples can benefit OOD detection.
\item We propose to establish an energy barrier between ID and PD data, which consequently and provably enhance the distinction between ID and OOD samples.
\item We devise a targeted tuning strategy for existing classifiers built upon the establishment of energy barrier, which achieves state-of-the-art results with a large margin under both near- and far-OOD scenarios.
\end{itemize}

\begin{figure*}
    \centering
    \subfloat[Model learns better clustered features with OEST.]{
        \label{fig:clusted-features}
        \includegraphics[width=0.55\textwidth]{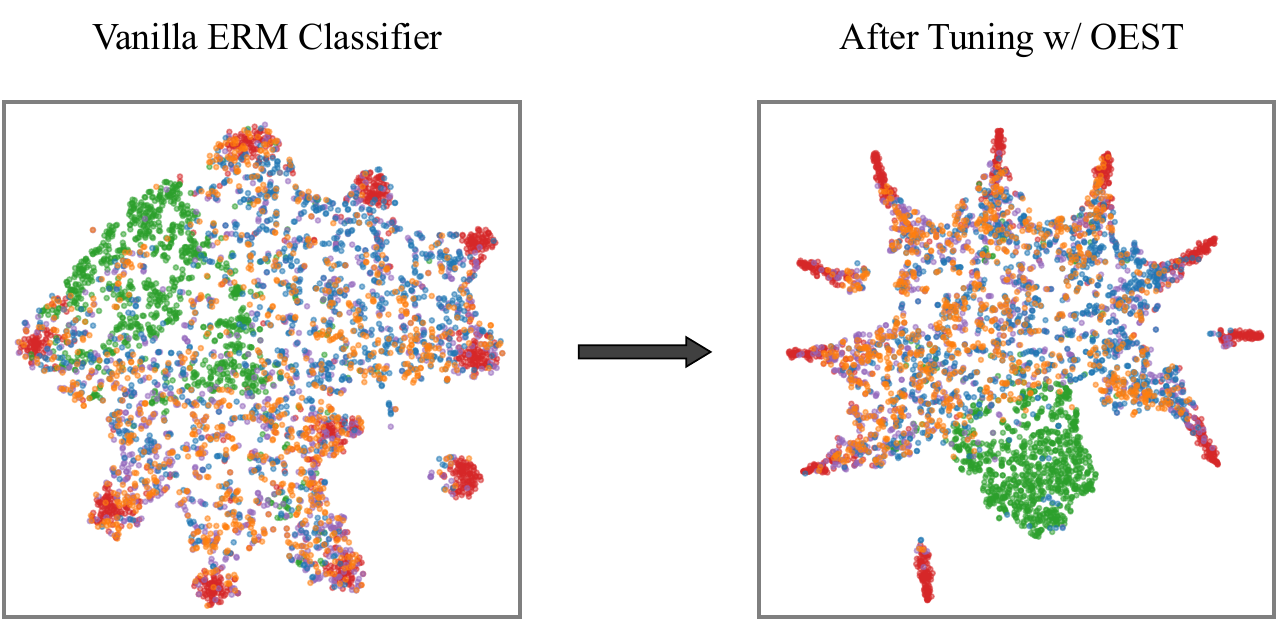}
        }
    \hspace{0.05\textwidth}
    \subfloat[Schematic of sample distribution and energy barriers.]{
        \label{fig:energy barrier}
        \includegraphics[width=0.35\textwidth]{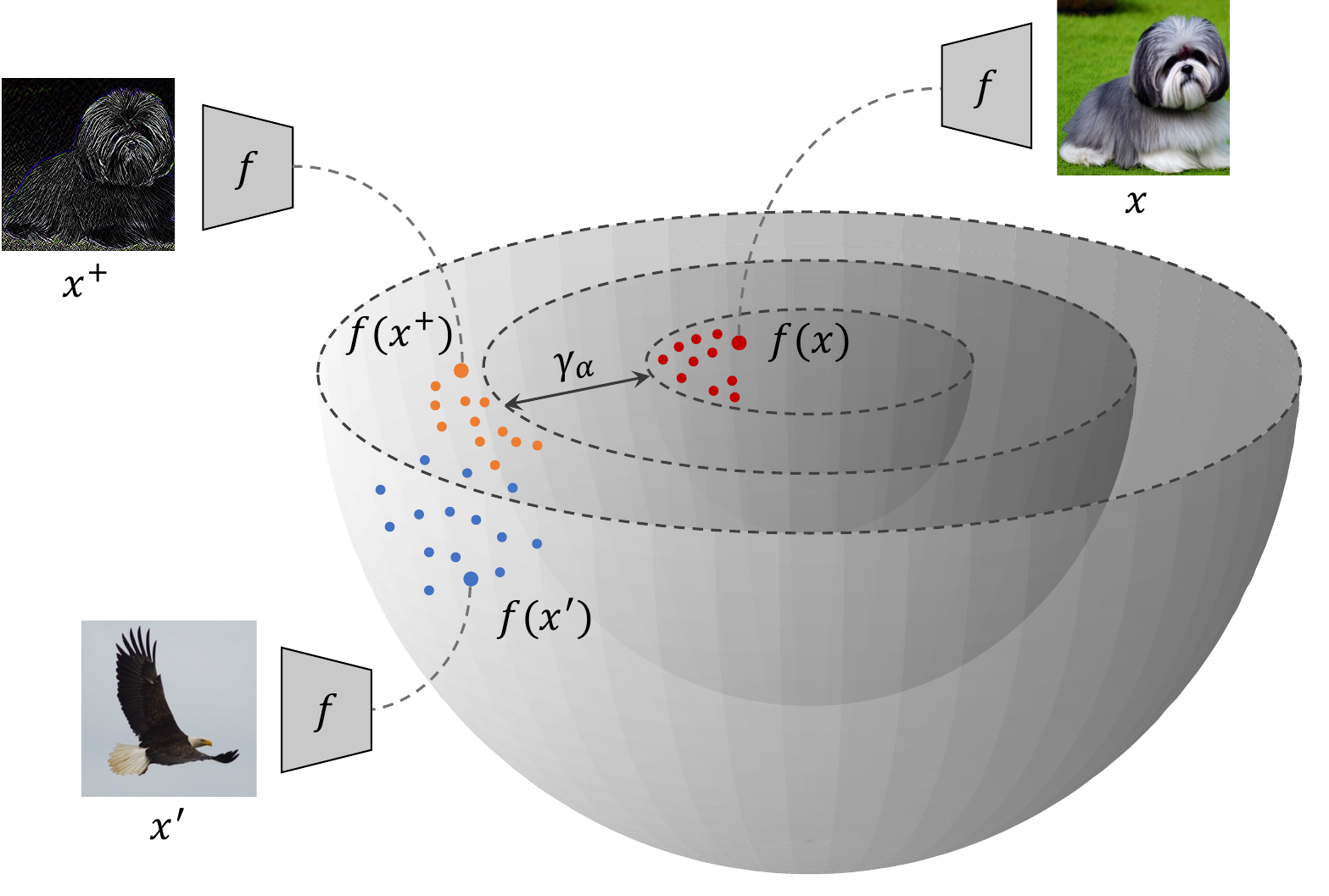}
        }
    \caption{
    (a) The t-SNE visualization of representations from CIFAR-10 (Red) test samples, rotated CIFAR-10 (Orange) test samples, CIFAR-100 (Blue), SVHN (Green) and ImageNet (Purple) before and after applying our training strategy OEST. 
    Specifically, the embedding features are extracted from the penultimate layer of ResNet-18 classifier (trained on CIFAR-10). \\
    (b) 
    We illustrate the feature space as a series of concentric spherical shells, where each shell corresponds to a certain energy level. 
    The innermost shell contains in-distribution samples with the lowest energy, represented by $\bm{x}$. 
    Moving outward, the orange points indicate augmented peripheral-distribution samples, denoted by $\bm{x}^{+}$. 
    OEST establishes an energy barrier (reading $\gamma_\alpha$) between ID ($\bm{x}$) and PD ($\bm x^+$) data, thus separating ID and out-of-distribution samples ($\bm{x}'$).
    }
\end{figure*}

The preliminary $4$-page version of this manuscript was presented in ICIP 2023 \cite{wu2023oest}, where we solely suggested applying simple transformations for OOD detection. 
In this extended paper, we provide a more comprehensive investigation into the proposed methodology;
we newly recognize the energy barrier between ID and PD data (in \Cref{sec: barriers}) to provably explain the empirical success of OEST, develop a theoretically rigorous energy loss function (in \Cref{sec: fine-tune}), and broaden the experimental evaluation across public benchmarks (in \Cref{sec:experiments}).

The rest of the paper is organized as follows. In Section~\ref{sec:related}, we review related works on OOD detection. 
In Section~\ref{sec: preliminaries}, we detail the preliminaries of this work, including its theoretical basis and a short introduction to the energy-based model.
In Section~\ref{sec:meth}, we illustrate the proposed method in detail, along with its theoretical analysis. 
Experimental results and analyses are provided in Section~\ref{sec:experiments}. Finally, we conclude the paper in Section~\ref{sec:conclu}. 

\section{Related Works}
\label{sec:related}

In this section, we review prior works on out-of-distribution (OOD) detection, involving three primary approaches: \ding{182}~OOD scoring methods (see Section~\ref{sec:2.1}), \ding{183}~training-based methods (see Section~\ref{sec:2.2}), and \ding{184}~methods with outlier exposure (see Section~\ref{sec:2.3}).

\subsection{OOD Scoring Methods} 
\label{sec:2.1}

In general, OOD scoring methods assess the likelihood that a sample originates from the training distribution, $\ie$, is in-distribution, based on sample features or model outputs. 

From a feature perspective, early studies employed parametric density estimation, assuming the feature embedding space consists of a mixture of multivariate Gaussian distributions, to score samples based on the Mahalanobis distance\cite{lee2018simple} or the gram matrix\cite{sastry2020gram}. 
A more recent approach~\cite{sun2022out} utilized the distance between the 
sample feature and its $k$-th nearest neighbor (KNN) as the score; 
likewise, SHE\cite{zhang2023she} leveraged the similarity between the 
sample feature and class centers 
for OOD detection.

For model outputs, one common OOD score is the maximum softmax prediction (MSP)\cite{hendrycks2017abaseline}. 
Subsequently, ODIN~\cite{liang2018odin} was proposed to utilize temperature scaling and input perturbation to maximize the MSP gap between ID and OOD data. 
Follow-up studies revealed that the key to ODIN's effectiveness is transforming the softmax score back to the logit space through temperature scaling; 
therefore, methods like the maximum logit scores\cite{hendrycks2022scaling} and the standardized max logits~\cite{jung2021standardized} were developed.

However, raw softmax or logit scores are found prone to overconfidence issues, which prompts the development of the energy-based models (EBM) \cite{liu2020energy, lin2021mood}. 
EBM employs an energy-based function to transform logits into a more reliable scoring metric and is theoretically underpinned via a likelihood perspective\cite{grathwohl2020your, morteza2022provable}.
Recent studies\cite{sun2021react, dong2022neural, djurisic2023ash, xu2024ish, wan2024nap} began to focus not only on the last or penultimate layers of the model but also on the hidden layers, as well as the activations among them. 

All the aforementioned OOD scoring methods (including ours) can be classified as \emph{post-hoc methods} because the OOD scores within, 
derived from feature or model outputs, can be implemented without modifying the training procedure or objective. 
These approaches avoid both the overhead cost of retraining and any detrimental impact on the ID accuracy of the original classifier.

\subsection{Training-based Methods} 
\label{sec:2.2}

Another genre of OOD detection intervenes in the model training.
We note, although methods with outlier exposure also fall under training-based approaches, we defer the related discussion to the next subsection and focus exclusively on the methods that do not utilize outlier samples in this subsection. 

Training-based approaches involve the following methodologies.
\ding{172}~From the perspective of confidence estimation, \cite{confbranch2018arxiv} proposed to modify the model structure, while \cite{wang2021energy} designed a new Softmax layer.
\ding{173}~Regarding modifying training objectives, G-ODIN \cite{hsu2020generalized} introduced a specialized objective called DeConf-C based on ODIN \cite{liang2018odin}, and \cite{wei2022mitigating} advocated for training with logit normalization (LogitNorm), a straightforward modification to the standard cross-entropy loss aimed at mitigating overconfidence.
\ding{174}~From the perspective of enhancing model representation, some studies have employed adversarial training\cite{good20nips, aloe20arxiv, hein2019relu, blur20iclr, outliermining21ecml} or stronger data augmentation\cite{mixup19nips, cutmix19cvpr, cutout17arxiv, augmix19arxiv, hendrycks2022pixmix} to enrich ID samples. 
\ding{175}~Additionally, self-supervised methods have been utilized to improve classifier robustness in OOD detection. 
\cite{transform18nips, hendrycks2019using} introduced an additional training objective, image transformation prediction, during model training. \ding{176}~Moreover, to enhance sensitivity to covariate shifts, \cite{tack2020csi} treats original and augmented samples as positive and negative examples, respectively. 
We remark this approach diverges from the traditional contrastive learning framework \cite{chen2020simple} by actively separating positive and negative samples in the feature space.

\subsection{Methods with Outlier Exposure} 
\label{sec:2.3}

As a broadly studied technique, outlier exposure in OOD detection can be divided into two main categories based on the source of outliers. 
\ding{172}~The first category utilizes a collected set of real-world OOD samples (\emph{real outliers}) to aid models in learning the discrepancy between ID and OOD, while \ding{173}~the second category focuses on \emph{generating outlier data} to enhance model robustness against various unforeseen OOD samples. 

\ding{172}~For the methodology based on real outliers, the initial approach was proposed by \cite{hendrycks2019oe}, which encourages high-entropic predictions on given outlier samples. 
Subsequently, MCD~\cite{yu2019mcd} employed a dual-branch network to distinguish between ID and OOD data, and a follow-up work\cite{liu2020energy} further tuned the classifier on both ID and OOD samples 
with energy-based loss (see more discussion on \Cref{sec: EBM}).
Other straightforward methods\cite{mohseni2020pseudolabel, outliermining21ecml, abstention21arxiv} treat the given OOD samples as the $(k+1)^\text{th}$-class. 
Recent studies\cite{backgroundsampling20cvpr, papadopoulos2021oecc, outliermining21ecml, ming2022posterior, Zhang_2023_WACV} started to focus on a selected set of meaningful outliers among numerous OOD samples. 

\ding{173}~For outlier generation, earlier studies tended to generate data based on low-dimensional feature spaces; 
specifically, they utilized KL divergence \cite{confcal18iclr}, low-density regions\cite{oodsg19nipsw}, high-confidence regions\cite{confgan18nipsw}, or meta-learning\cite{maml20nips}. 
Subsequent work instead proposes a new paradigm for generating outliers that can be implemented using both GANs and diffusion models\cite{dai2021mlad, dai2024generating}. 
Additionally, considering the challenges of image generation in the high-dimensional pixel space, 
recent approaches, such as VOS\cite{du2022vos} and NPOS\cite{tao2022npos}, 
have proposed to
generate outlier data by injecting perturbations into ID sample features.

In summary, OOD detection leveraging real outliers can achieve superior performance. However, the effectiveness of these methods can be significantly influenced by the correlations between the provided and the actual OOD samples \cite{lessbias19bmvc}.

\section{Preliminaries}
\label{sec: preliminaries}

In this section, we provide a formulation of out-of-distribution (OOD) detection in Section~\ref{sec: OOD}, followed by a revisit of the energy-based model (EBM) for OOD detection in Section~\ref{sec: EBM}.
For the reader's convenience, we list a collection of defined notations in Table~\ref{tab: notations}.

\begin{table}[!t]
\centering
\caption{Detailed Description of the Main Notations}
\label{tab: notations}
\begin{tabular}{ll}
\toprule
Notation & Definition \\ \midrule
$\id$ & joint distribution of ID data $(\mtx x, y)$ \\
$\ood$ & joint distribution of OOD data $(\bm{x}', y')$ \\
$\idimage$ & support space of ID inputs \\
$\oodimage$ & support space of  OOD inputs \\
$\mathcal{X}_\text{PD}$ & support space of PD inputs \\
$\idlabel$ & support space of ID labels \\
$f_\theta$ & the neural classifier with parameters $\theta$ \\ 
$\mathcal{L}_\text{CE}$ & cross-entropy loss function \\ 
$\mathcal{L}_\text{energy}$ & energy-based loss function \\ 
\bottomrule
\end{tabular}
\end{table}

\subsection{Out-of-Distribuion Detection}
\label{sec: OOD}

The emergence of out-of-distribution (OOD) detection was driven by the practical need for models to discern and reject inputs that are semantically different from the training distribution. 
To discuss this concept within a rigorous framework, we embrace the widely acknowledged definition of in-distribution data and out-of-distribution data, as outlined in the previous literature\cite{hendrycks2017abaseline, conf/nips/YangWZZDPWCLSDZ22:openood,zhang2023openood}.

In this paper, we consider a typical $C$-class classification problem, where we have access to independently and identically distributed (\textit{i.i.d.}) samples $(\bm{x}, y)$ drawn from the ID distribution, $\mathcal{D}_\text{ID}$.
Specifically, we denote the input space as $\idimage$, and the label space as $\idlabel = [C]$. 
In contrast, out-of-distribution (OOD) samples are drawn from a different distribution, $\mathcal{D}_\text{OOD}$.
For each OOD sample $(\bm{x}', y')$, the input $\bm{x}'$ has semantics that differ from those of any ID samples, and notably the label $y'$ does not belong to any of the $C$ classes present in the training dataset, \ie, $y'\notin\idlabel$. 
Typically, we denote the set of OOD inputs as $\oodimage$.

The goal of OOD detection is to design a score function:
\begin{equation}
    s_\theta(\bm x) \in \mathcal{R},
\end{equation}
where $\theta$ is the learnable parameters.
The desired outcome is that in-distribution samples receive higher scores than out-of-distribution samples, and consequently an OOD discriminator can be straightforwardly defined using this score function.

As a side note, one might consider modeling \( p(\bm{x}) \) using a generative model, and intuitively take \( p(\cdot) \) as a score function, given the strong modeling capabilities in modern generative modeling.
However, previous research has shown that the density functions estimated by deep generative models cannot be reliably used for OOD detection \cite{DBLP:conf/iclr/NalisnickMTGL19:DoGenKnow}.

Moreover, in this work we consider the scenario in which we only have access to a trained classifier, and our objective is to repurpose it as an OOD discriminator.
To address this practical constraint, energy-based models (EBMs) \cite{lecun2006tutorial} offer an alternative approach to constructing a score function to distinguish OOD samples with a classifier.
We will shortly detail EBM in the next subsection.

\subsection{Energy-Based Model}
\label{sec: EBM}

As an OOD scoring method (see \Cref{sec:2.1}), the essence of the energy-based model is to construct an energy function $E(\cdot, \cdot)$
that maps each sample $(\mtx x, y)$ to a scalar, $E(\bm x, y)$, known as the \textit{energy}.
Energy values can be converted into a probability density $p(\bm x, y)$ in the form of \textit{Gibbs distribution} ($T$ is the temperature parameter):
\begin{equation}
    \label{eq:p(x,y)}
    p(\bm x, y) = \frac{\exp ( - E(\bm x, y) / T)}{Z},
\end{equation}
where $Z = \int_{\bm x} \sum_{i=1}^C \exp (- E(\bm x, i) / T)$ is the normalizing constant (also known as the partition function).
The probability density $p(\bm x)$ can then be computed as:
\begin{equation}
    \label{eq:p(x)}
    p(\bm x) = \sum_y p(\bm x, y) = \frac{\sum_y \exp ( - E(\bm x, y) / T)}{Z}.
\end{equation}
The normalization constant $Z$ is usually intractable to compute or reliably estimate over the input space. 
To address this, standard approaches in log-concave sampling is to take the negative logarithm of both sides in Eq.~\eqref{eq:p(x)}~\cite{chewi2023log}, giving:
\begin{equation}
    \label{eq:-logp(x)}
    - \log p(\bm x) = - \log \sum_y \exp\left( - E(\bm x, y) / T \right) + \log Z.
\end{equation}
The equation above indicates that omitting the term $Z$ does not affect OOD detection, as $\log Z$ is constant to each sample.
Consequently, we can define the \textit{Helmholtz} \textit{free} \textit{energy}
$E(\bm x)$ (here we reload the notation $E(\cdot)$ for convenience) as a surrogate of $- \log p(\bm x)$:
\begin{equation}
    E(\bm x) =  - T \log \sum_y \exp ( - E(\bm x, y) / T).
\end{equation}
Now, let us consider a \emph{fixed} classifier, $f_\theta: \mathbb{R}^D \mapsto \mathbb{R}^C$, with the model parameters $\theta$ that have already been trained on $\mathcal{D}_\text{ID}$.
A straightforward approach to construct $E(\cdot, \cdot)$ is to define the parameterized energy function as $E(\bm x, y; f_\theta) = - f_\theta^{(y)}(\bm x) / T$, where $f_\theta^{(i)}(\bm x)$ represents the $i$-th output of $f_\theta(\bm x)$.
The parameterized Helmholtz free energy, $E(\bm x; f_\theta)$, then becomes:
\begin{equation}
E(\mtx x; f_\theta) = - T \log \sum_{i=1}^{C}\exp(f_\theta^{(i)}(\mtx x)/T).
\label{eq:energy_function}
\end{equation}
Here, we simply set \( T = 1 \) for computational convenience in the following sections.
To this end, the score function is defined as $s_\theta(x) = - E(\bm x; f_\theta)$.
The OOD discriminator $D(\mtx x; \tau, f_\theta)$ is then given by:
\begin{equation}
    D(\mtx x; \tau, f_\theta)=
    \begin{cases} 
        0 & \text{if } - E(\mtx x; f_\theta) \leq \tau, \\
        1 & \text{if } - E(\mtx x; f_\theta) > \tau,
    \end{cases}
\label{eq:energy_detector}
\end{equation}
where $\tau$ is a threshold.
Samples with higher energy (lower score) values are considered as OOD inputs and vice versa.

As a closing remark, we note it is also feasible to further tune the classifier $f_\theta(\cdot)$ for better OOD detection performance~\cite{liu2020energy}.
We recall Eq.~\eqref{eq:energy_function} can serve as a surrogate for the negative log-likelihood of \( p(\bm{x}) \) for \emph{fixed $E(\bm x, y)$},
and following the spirit of MLE (maximum likelihood estimation) in score-based methods, users can intuitively turn to minimize the energy for better modeling the data.
We will revisit this tuning strategy in \Cref{sec: fine-tune} (see the paragraph ``Issues of $\mathcal{L}_{\text{energy}}$'').

\section{Methodology}
\label{sec:meth}

In this section, we first introduce the novel concept of peripheral-distribution samples in Section~\ref{sec: PD}.
Following that, we present a fresh understanding of Energy-Based Models (EBMs) through a new concept ``energy barrier'' proposed in Section~\ref{sec: barriers}.
This barrier effectively separates in-distribution and out-of-distribution samples.
Finally, in Section~\ref{sec: fine-tune}, we propose a straightforward tuning strategy that leverages PD samples to improve the robustness of existing classifiers.

\subsection{Peripheral-Distribution Samples}
\label{sec: PD}

To effectively dissect in- and out-of-distribution data, or even near-distribution data~\cite{tack2020csi}, this paper introduces a new concept, peripheral-distribution (PD) samples.
This concept arises from the lack of real outliers (OOD data) in training; due to this lack, augmented samples from a specially defined distribution are usually taken as proxies for such outliers.

It remains an open problem how those augmented samples are related to ID data. 
\cite{chen2020simple} recognized augmented samples as positive, while \cite{tack2020csi} discovered some augmentations (\eg, rotation) are beneficial when they are treated as negative.
Here, as shown in Figure~\ref{fig:clusted-features}, certain augmented samples are found to be peripheral to the ID data.
Motivated by the observation, we conceptually refer those augmented samples to peripheral-distribution data, 
an interpolation in the feature space to connect in-distribution and out-of-distribution samples.

From a practical standpoint, PD data can be generated by applying specific data augmentation transformations to ID samples. 
This view is inspired by the principle of contrastive learning \cite{chen2020simple}, which suggests that transformed data tends to remain close to the original data in the feature space while still exhibiting a shift in feature distribution.
As demonstrated in Figure~\ref{fig:clusted-features}, the representations of the augmented data indeed are located in the hypothesized interpolation zone between in- and out-of-distribution samples.

We give the formulation of PD data as follows. 
Consider a set $\mathcal{S}$ comprising different transformations, which can be either random or deterministic. 
For a given batch of ID samples $\mathcal{B} = \{\mtx x_i\}_{i=1}^{B}, \forall \mtx x_i \in \idimage$, we can generate peripheral-distribution samples $\mathcal{B}^+$ by augmenting $\mathcal{B}$ with transformations from the predefined collection $\mathcal{S}$ (which is fixed and thus we omit the dependence on it in the notation of $\mathcal{B}^+$). 
We thereby define the peripheral-distribution samples as 
\begin{equation}
    \mathcal{B}^+ = \bigcup_{S \in \mathcal{S}} \{\mathcal{B}_S\}~,~\text{where}~ \mathcal{B}_S:=\{S(\mtx x_i)\}^B_{i=1}~.
    \label{eq:peripheral-distribution}
\end{equation}
Later, for the given $\mathcal{X}_\text{ID}$, we denote the support of peripheral-distribution data as $\mathcal{X}_\text{PD}$, with $\mathcal{B}^+ \subset \mathcal{X}_\text{PD}$.

\begin{figure*}[!t]
  \centering
  \includegraphics[width=0.95\textwidth]{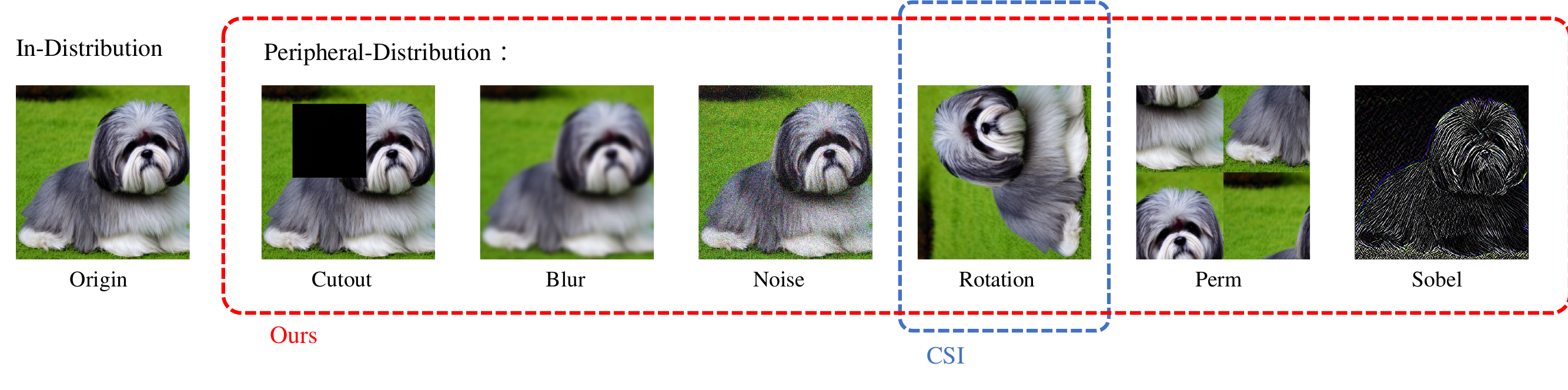}
  \vspace{-0.3cm}
  \caption{
  Visualization of the original image and the considered simple transformations. 
  The difference between our design and a baseline method CSI\protect\cite{tack2020csi} is also exhibited. 
  CSI must select different suitable transformations elaborately for each particular scenario, and specifically for CIFAR-10 CSI chooses rotation. However, our method utilizes all kinds of transformations.}
  \vspace{-0.2cm}
  \label{fig:simple_transformation}
\end{figure*}

\subsection{Energy Barrier Assumption}
\label{sec: barriers}

In this section, we demonstrate that each peripheral-distribution sample can provide an ``energy barrier'' (defined in \Cref{ass:periphery}), which benefits the classifier tuning.  
Similar to \cite{wang2022chaos, shen2022connect}, we make the following assumption regarding the representations of peripheral-distribution data and show how the energy barrier can indirectly expose the differences between in- and out-of-distribution samples.

Specifically, we consider a linear classifier (adopted in the most common softmax classifier): 
\begin{equation*}
f(\mtx x) \defeq \mtx C \mtx x,
\end{equation*}
where $\mtx C \in \mb R^D \times \mb R^C$ maps a sample $\mtx x$ from input space $\mathbb{R}^{D}$ to $C$ values, a.k.a.\ logits.
We denote the $i$-th row in $\mtx C$ as $\mtx c_i$, a length-$D$ vector indicating the corresponding class;
we sometimes call $\mtx c_i$ the ``class representation'' of class $i$.
Under the image classification setting, the linear classifier $f(\mtx x):\mathbb{R}^{D}\mapsto\mathbb{R}^{C}$ is usually the last layer of a neural network.
\begin{assumption}[Energy Barrier Assumption on Peripheral-Distribution Samples]
With the classifier $f(\cdot)$ and an out-of-distribution instance $\mtx x'$, we assume all the representations, including the class representations $\mtx c_i$'s, lie in a bounded domain with radius $B > 0$.
Moreover, for a random ID sample~$\mtx x$ and a certain probability level $\alpha \in (0, 1)$, 
there exists a certain augmented sample $\mtx x^+$ such that 
\begin{align}
E(\mtx x^+; f) - E(\mtx x; f) > B \|\mtx x' - \mtx x^+\| + \gamma_\alpha
\label{eqn:energy_barrier}
\end{align}
will hold with probability $1-\alpha$, where $\gamma_\alpha \geq 0$ is a constant.
\label{ass:periphery}
\end{assumption}
\begin{remark}
\normalfont
In addition to the usual compact domain assumption, we require there exists a large enough energy barrier ($\gamma_\alpha$ in Eq.~\eqref{eqn:energy_barrier}) between the original samples and one peripheral augmented sample $\mtx x^+$;
the high probability inequality also implies there is supposed to be one augmented sample $\mtx x^+$ closer to the out-of-distribution sample $\mtx x'$ than to most ID samples (otherwise the inequality will be invalid if $\|\mtx x' - \mtx x^+\|$ is overly large), 
which is heavily utilized in contrastive learning theory \cite{wang2022chaos, shen2022connect}\footnote{
For example, in scenarios where vehicle images are concerned, the certain augmentation cropping, which transforms an image of a vehicle to merely a tire, obviously moves its semantic boundary toward the out-of-distribution category.
}.
In this regard, peripheral augmented samples can help differentiate confusing OOD images close to ID samples.
\end{remark}

Through lifting the energy barrier $E(\mtx x^+; f) - E(\mtx x; f)$ as in \Cref{ass:periphery}, we can construct a gap between OOD and ID samples; the finding is formulated as follows. 
\begin{theorem}
When \Cref{ass:periphery} holds, we then have
$$
E(\mtx x'; f) - E(\mtx x; f) > \gamma_\alpha
$$
holds with probability $1-\alpha$.
The OOD sample $\mtx x'$ will be guaranteed to have higher energy than a random ID sample~$\mtx x$ with high probability.
\label{thm:periphery}
\end{theorem}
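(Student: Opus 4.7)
The plan is to reduce the desired inequality to the assumed energy-barrier inequality by leveraging a Lipschitz-type bound on the Helmholtz free energy of the linear classifier. The asserted inequality $E(\mtx x'; f) - E(\mtx x; f) > \gamma_\alpha$ differs from Assumption~\ref{ass:periphery} only in that $\mtx x^+$ is replaced by $\mtx x'$. So the whole task is to upper bound $E(\mtx x^+; f) - E(\mtx x'; f)$ by $B\|\mtx x' - \mtx x^+\|$, which is exactly the slack term appearing in Eq.~\eqref{eqn:energy_barrier}.

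First, I would compute the gradient of the free energy. Since $f(\mtx x) = \mtx C \mtx x$ with rows $\mtx c_1, \dots, \mtx c_C$, one has
$$
\nabla_{\mtx x} E(\mtx x; f) \;=\; -\sum_{i=1}^{C} \operatorname{softmax}_i\!\bigl(\mtx C \mtx x\bigr)\, \mtx c_i,
$$
a convex combination of class representations (up to sign). By the compact-domain part of Assumption~\ref{ass:periphery}, each $\|\mtx c_i\| \leq B$, so by the triangle inequality $\|\nabla_{\mtx x} E(\mtx x; f)\| \leq B$ everywhere. The mean value theorem (or integration along the segment joining $\mtx x^+$ and $\mtx x'$) then yields the $B$-Lipschitz bound
$$
\bigl|E(\mtx x^+; f) - E(\mtx x'; f)\bigr| \;\leq\; B\,\|\mtx x' - \mtx x^+\|,
$$
and in particular $E(\mtx x^+; f) - E(\mtx x'; f) \leq B\|\mtx x' - \mtx x^+\|$.

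Second, I would rearrange this into $E(\mtx x'; f) \geq E(\mtx x^+; f) - B\|\mtx x' - \mtx x^+\|$ and substitute into Eq.~\eqref{eqn:energy_barrier}. On the event of probability $1-\alpha$ guaranteed by the assumption, one has
$$
E(\mtx x'; f) - E(\mtx x; f) \;\geq\; \bigl(E(\mtx x^+; f) - E(\mtx x; f)\bigr) - B\|\mtx x' - \mtx x^+\| \;>\; \gamma_\alpha,
$$
which is the claim. The probability statement is preserved verbatim since the Lipschitz bound is deterministic and the only randomness lies in $\mtx x$ (and hence in the choice of a suitable $\mtx x^+$).

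I expect no real obstacle beyond cleanly verifying the Lipschitz constant, which is the only analytical ingredient and is essentially a one-line softmax-gradient calculation. A minor point worth handling carefully is that Assumption~\ref{ass:periphery} says representations lie in a ball of radius $B$; I am implicitly using this for the $\mtx c_i$'s to control the gradient norm, which matches the intended reading. If instead the intended meaning were that only input representations (not the class representations) are bounded, one would need to tighten the statement, but otherwise the argument is immediate.
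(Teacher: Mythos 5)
Your proof is correct, and it uses the same decomposition as the paper: insert the augmented sample $\mtx x^+$, write $E(\mtx x';f)-E(\mtx x;f)$ as $\bigl(E(\mtx x';f)-E(\mtx x^+;f)\bigr)+\bigl(E(\mtx x^+;f)-E(\mtx x;f)\bigr)$, and lower-bound the first term by $-B\|\mtx x'-\mtx x^+\|$ so that Assumption~\ref{ass:periphery} finishes the argument. Where you differ is in the key analytical lemma for that first term. The paper proceeds algebraically: it writes the ratio $\bigl(\sum_i a_i\bigr)/\bigl(\sum_i b_i\bigr)$ appearing inside the log as a convex combination of the individual ratios $a_i/b_i$, picks the smallest one, and applies Cauchy--Schwarz to $\dotp{\mtx x^+-\mtx x'}{\mtx c_i}$, yielding the one-sided bound $E(\mtx x';f)-E(\mtx x^+;f)\geq -B\|\mtx x^+-\mtx x'\|$. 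You instead observe that $\nabla_{\mtx x}E(\mtx x;f)$ is a negative softmax-weighted convex combination of the $\mtx c_i$'s, hence has norm at most $B$, and invoke the mean value theorem to get the stronger two-sided Lipschitz bound $|E(\mtx x^+;f)-E(\mtx x';f)|\leq B\|\mtx x'-\mtx x^+\|$. Both routes are standard ways of certifying that log-sum-exp composed with a linear map is $B$-Lipschitz; yours is slightly more general (it gives Lipschitzness everywhere, not just the one direction needed) and arguably cleaner, while the paper's is more elementary in that it avoids any calculus. Your closing caveat is moot: Assumption~\ref{ass:periphery} explicitly states that the class representations $\mtx c_i$ lie in the radius-$B$ domain, which is exactly what your gradient bound requires.
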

The proof is deferred to Appendix~\ref{sec:proof}.

\begin{remark}
\normalfont
To close this subsection, we remark the validity of \Cref{thm:periphery} heavily depends on the energy barrier assumption \Cref{ass:periphery}, which may not necessarily hold for the trained classifier $f(\cdot)$.
However, the theoretical result motivates our following empirical design, which aims to \emph{establish} an energy barrier between the original samples and the augmented ones.
\end{remark}

\subsection{Establishing the Energy Barrier via PD Samples}
\label{sec: fine-tune}

There are two requirements implied by \Cref{ass:periphery}, that 
\ding{182}~the augmented samples constitute a qualified semantics interpolation between ID and OOD samples and 
\ding{183}~there is an energy barrier between the aforementioned augmented samples and the ID data.
Our training strategy is accordingly composed of two parts, \ding{182}~choices of proper data augmentations, and \ding{183}~a carefully designed tuning objective.

\ding{182}~For the choices of data augmentations, we consider a flurry of regular transformations illustrated in Figure~\ref{fig:simple_transformation}: 
\begin{itemize}[leftmargin=*]
\item Geometric transformation: cutout\cite{cutout17arxiv}, permutation\cite{tack2020csi}, and rotation\cite{gidaris2018unsupervised}.
\item Appearance transformations: Gaussian noise, Gaussian blur, and Sobel filtering\cite{kanopoulos1988design}.
\end{itemize}
We note the transformations are beneficial considering they all contribute to a qualified interpolation when the classifier has already learned discriminative features from ID data, 
allowing augmented samples' representation to reside near the in-distribution samples' in the feature space, without completely overlapping with them.
The effects of each transformation are verified through the experiments in Section~\ref{subsec:simple transformations}.

\ding{183}~Considering the practical goal of OOD detection, which involves both classification and distinguishing OOD samples, the tuning objective consists of two components: the standard cross-entropy loss and an energy-based loss. 
Thus, the overall tuning objective is roughly: 
\begin{equation}
\min_{\theta}\quad \mathbb{E}_{(\mtx x, y), (\mtx x', y') \sim \id \\ } \; \mathcal{L}_{\text{CE}}(\mtx x, y) + \alpha \cdot \mathcal{L}_{\text{energy}}(\mtx x, \mtx x'),
\label{eq: training objective}
\end{equation}
where $\alpha$ is a loss scaling factor and we note $\mathcal{L}_{\text{energy}}(\mtx x, \mtx x')$ depends on the ID sample $\mtx x$ and another \iid~copy $\mtx x'$.
Previously, inspired by the energy-bounded learning objective proposed in \cite{liu2020energy}, which was originally designed for tuning with real outliers, we introduced a similar energy term for OEST \cite{wu2023oest}.
Particularly, the \emph{energy-bounded loss} $\mathcal{L}_{\text{energy}}$ in Eq.~\eqref{eq: training objective} for an ID input pair $(\mtx x_{\text{in}}, \mtx x_{\text{in}}')$ is:
\begin{equation}
\begin{aligned}
 \mathcal{L}_{\text {energy}}(\mtx x_{\text{in}}, \mtx x_{\text{in}}') &= \left( \max(0, E(\mtx x_{\text{in}}) - m_{\text{in}}) \right)^2 \\
 &\quad + \left( \max(0, m_{\text {per}} - E(\mtx x_{\text{per}})) \right)^2,
\end{aligned}
\label{eq: energy-bound-loss}
\end{equation}
where $m_{\text{in}}$ and $m_{\text{per}}$ are the margin hyper-parameters for the energy gap, 
and $\mtx x_{\text{per}}$ is the random PD sample augmented from $\mtx x_{\text{in}}'$.
Therefore, the energy loss $\mathcal{L}_{\text{energy}}$ penalizes the in-distribution samples whose energy values are higher than $m_{\text{in}}$ and the peripheral-distribution samples whose energy values are respectively lower than $m_{\text{per}}$.

\begin{figure}[t]
\vskip -0.0in
\makebox[0.01\textwidth]{} %
\includegraphics[width=0.43\textwidth]{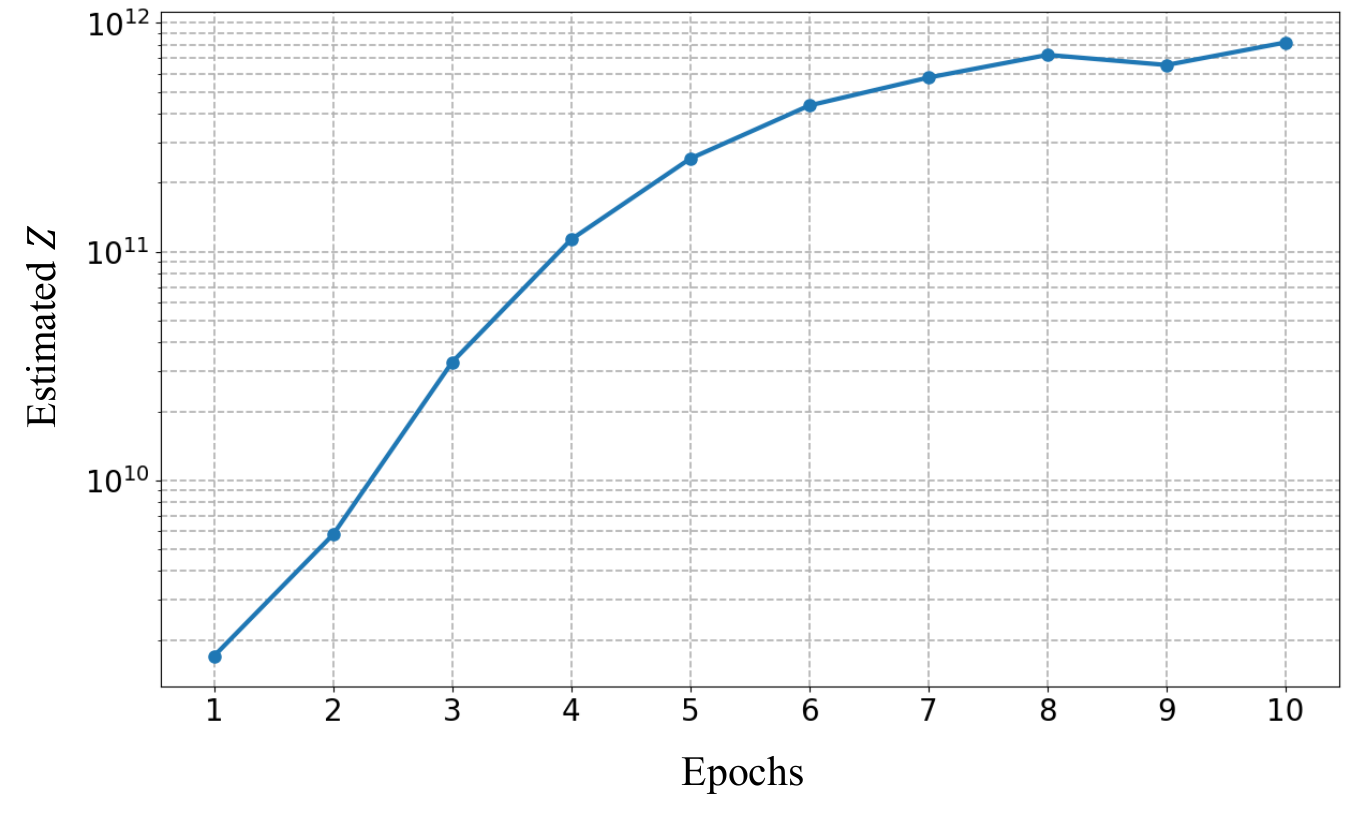}
\vskip -0.1in
\caption{Visualization of $Z$ during the tuning process of OEST on CIFAR-10. Here, $Z$ is computed as the empirical aggregation of all images used in~\cite{zhang2023openood}.}
\label{fig: Z}
\end{figure}

\textbf{Issues of $\mathcal{L}_{\text{energy}}$.}
Revisiting Eq.~\eqref{eq:-logp(x)}, we recall the energy \( E(\mtx x) \) represents \( -\log p(\mtx x) \), where we generally omit \( \log Z \) in practice due to the intractability of computing \( \log Z \). 
As discussed in Section~\ref{sec: EBM}, it is theoretically justifiable to ignore the normalization constant during inference using the fixed classifier $f(\cdot)$, but its omission becomes questionable in tuning $f(\cdot)$ with the energy-bounded loss $\mathcal{L}_{\text{energy}}$.

As a score-based method, the implicit goal of tuning is to maximize \( p(\mtx x_{\text{in}}) \) for ID samples;
although we once commented in Eq.~\eqref{eq:energy_function} that the energy function $E(\mtx x_{\text{in}})$ can serve as a surrogate for the negative log-likelihood of \( p(\mtx x_{\text{in}}) \) for \emph{fixed $E(\cdot)$},
we note in tuning, minimizing $E(\bm x)$ \textbf{does not equal} maximizing \( p(\bm{x}) \) considering $\log Z$ is changing as well.
As shown in Figure~\ref{fig: Z}, \( Z \) undergoes significant fluctuations during the tuning process, which echoes that ignoring \( Z \) in such scenarios could mislead the MLE objective. 

Therefore, we propose a new energy loss called \emph{energy-barrier loss}, which is given as:
\fontsize{9.6pt}{12pt}
\begin{equation}
\begin{aligned}
 \mathcal{L}_{\text {energy*}}(\mtx x_{\text{in}}, \mtx x_{\text{in}}') &= 
 \Big[\log \sigma \Big( 
 \big( E \left(\mtx x_{\text{per}} \right) - E\left (\mtx x_{\text{in}} \right) \big) / \beta \Big) \Big], 
\end{aligned}
\label{eq: energy-barrier-loss}
\end{equation}
\normalsize
where again $\mtx x_{\text{per}}$ is the random PD sample augmented from~$\mtx x_{\text{in}}'$, $\sigma(\cdot)$ is the sigmoid function, and $\beta$ is the hyper-parameter.
In Eq.~\eqref{eq: energy-barrier-loss}, the energy difference term $E(\mtx x_{\text{per}}) - E(\mtx x_{\text{in}})$ successfully removes the dependence on $\log Z$, making the formulation theoretically rigorous for the MLE spirit beneath the energy-based model \cite{lecun2006tutorial}.

Overall, $\mathcal{L}_{\text {energy*}}$ emphasizes the relative energy differences between in-distribution and peripheral-distribution samples, in line with the principles discussed in Section~\ref{sec: barriers}, \ie, establishing an energy barrier around the peripheral-distribution samples.
In the follow-up experiments, we will denote \emph{OEST} as the method with the energy loss $\mathcal{L}_{\text {energy}}$, and \emph{OEST*} as the method with the energy loss $\mathcal{L}_{\text {energy*}}$.

\begin{algorithm}[t!]
\caption{OEST* Tuning Algorithm for OOD Detection}
\begin{algorithmic}[1]
\REQUIRE Training data; original classifier $f_{\theta}(\cdot)$; transformation set $\mathcal{S}$; 
learning rate $\eta$; transformation ratio $\tau$; number of epochs $T$.
\ENSURE Tuned model $f_{\theta^*}$

\STATE Initialize the model parameter $\theta$
\FOR{each epoch $t = 1, 2, \dots, T$}
    \FOR{each mini-batch \((\mtx x, y)\) and \((\mtx x', y')\)}
        \STATE Compute the total loss: \\ 
        \quad \(\mathcal{L} \leftarrow \mathcal{L}_\text{CE}(\mtx x, y) + \alpha \cdot \mathcal{L}_\text{energy*}(\mtx x, \mtx x')\). \\
        \quad (In $\mathcal{L}_\text{energy*}$, a transformation from $\mathcal{S}$ is applied to \\
        \quad $\mtx x'$ with ratio $\tau$.)
        \STATE Perform backpropagation and update $\theta$ as: \\
        \quad \(\theta \leftarrow \theta - \eta \nabla_{\theta} \mathcal{L}\)
    \ENDFOR
\ENDFOR
\RETURN Tuned neural network parameters $\theta^*$.
\end{algorithmic}
\label{algo:oest}
\end{algorithm}

\section{Experiments}
\label{sec:experiments}

In this section, we conduct extensive experiments to validate the effectiveness of our method and compare its performance against existing approaches. Additionally, we perform comprehensive ablation studies to assess the impact of different components of the framework. It is important to note that in practical applications, the out-of-distribution (OOD) detection task is indeed twofold: a) accurately categorizing in-distribution samples, as in conventional classification tasks, and b) enabling a well-trained classifier to distinguish out-of-distribution samples during the inference phase correctly.

In Section~\ref{sec:setup}, we first introduce our experimental setup, including the datasets, evaluation metrics, and training details. Then, in Section~\ref{sec:results}, we present the main experimental results, showcasing the efficacy of our method across various datasets. We provide a thorough analysis of its performance, comparing it with other methods to demonstrate the robustness and reliability of our approach. Finally, in Section~\ref{sec:ablation}, we carry out exhaustive ablation studies to systematically examine the contribution of each component, offering deeper insights into their individual roles. These experiments are designed to comprehensively validate the effectiveness of our method, ensuring it performs well on both in-distribution classification and OOD detection.

\begin{table*}[htbp]
    \centering
    \caption{OOD detection performance (\%) on CIFAR-10. All the results are average values obtained from 3 random runs. The top-1 results are in \textbf{bold}, while the second- and third-best results are \underline{underlined}.}
    \label{tab: main_CIFAR-10}
    \setlength\tabcolsep{1pt}
    \resizebox{\textwidth}{!}{
        \begin{tabular}{l cc cc cc cc cc cc cc c}
            \toprule
            \multirow{3}{*}{\parbox{1.4cm}{Method}}
            &\multicolumn{2}{c}{CIFAR-100}		
            &\multicolumn{2}{c}{Tin}		
            &\multicolumn{2}{c}{MNIST}		
            &\multicolumn{2}{c}{SVHN}		
            &\multicolumn{2}{c}{Textures}		
            &\multicolumn{2}{c}{Places365}		
            &\multicolumn{2}{c}{Average}		
            &\multirow{3}{*}{ID ACC $\uparrow$}
            \\
            \cmidrule(lr){2-3} \cmidrule(lr){4-5}\cmidrule(lr){6-7} \cmidrule(lr){8-9} \cmidrule(lr){10-11} \cmidrule(lr){12-13} \cmidrule(lr){14-15}
            &AUROC$\uparrow$	&FPR95$\downarrow$	&AUROC$\uparrow$	&FPR95$\downarrow$	&AUROC$\uparrow$	&FPR95$\downarrow$	&AUROC$\uparrow$	&FPR95$\downarrow$	&AUROC$\uparrow$	&FPR95$\downarrow$	&AUROC$\uparrow$	&FPR95$\downarrow$	&AUROC$\uparrow$	&FPR95$\downarrow$
            \\
            \midrule
            \multicolumn{16}{c}{\textbf{Post-Hoc Inference Methods}} \\

            ASH\cite{djurisic2023ash} & 74.11 & 87.31 & 76.44 & 86.25 & 83.16 & 70.00 & 73.46 & 83.64 & 77.45 & 84.59 & 79.89 & 77.89 & 77.42  & 81.61  & \underline{95.06} \\
            SHE\cite{zhang2023she} & 80.31 & 81.00 & 82.76 & 78.30 & 90.43 & 42.22 & 86.38 & 62.74 & 81.57 & 84.60 & 82.89 & 76.36 & 84.06  & 70.87  & \underline{95.06} \\
            ODIN\cite{liang2018odin} & 82.18 & 77.00 & 83.55 & 75.38 & 95.24 & 23.83 & 84.58 & 68.61 & 86.94 & 67.70 & 85.07 & 70.36 & 86.26  & 63.81  & \underline{95.06} \\ 
            MSP\cite{hendrycks2017abaseline} & 87.19 & 53.08 & 88.87 & 43.27 & 92.63 & 23.64 & 91.46 & 25.82 & 89.89 & 34.96 & 88.92 & 42.47 & 89.83  & 37.21  & \underline{95.06} \\ 
            MLS\cite{hendrycks2022scaling} & 86.31 & 66.59 & 88.72 & 56.06 & 94.15 & 25.06 & 91.69 & 35.09 & 89.41 & 51.73 & 89.14 & 54.84 & 89.90  & 48.23  & \underline{95.06} \\ 
            EBO\cite{liu2020energy} & 86.36 & 66.60 & 88.80 & 56.08 & 94.32 & 24.99 & 91.79 & 35.12 & 89.47 & 51.82 & 89.25 & 54.85 & 90.00  & 48.24  & \underline{95.06} \\ 
            TempScale\cite{guo2017calibration} & 87.17 & 55.81 & 89.00 & 46.11 & 93.11 & 23.53 & 91.66 & 26.97 & 90.01 & 38.16 & 89.11 & 45.27 & 90.01  & 39.31  & \underline{95.06} \\
            GEN\cite{liu2023gen} & 87.21 & 58.75 & 89.20 & 48.59 & 93.83 & 23.00 & 91.97 & 28.14 & 90.14 & 40.74 & 89.46 & 47.03 & 90.30  & 41.04  & \underline{95.06} \\
            KNN\cite{sun2022out} & 89.73 & 37.64 & 91.56 & 30.37 & 94.26 & 20.05 & 92.67 & 22.60 & 93.16 & 24.06 & 91.77 & 30.38 & 92.19  & 27.52  & \underline{95.06} \\

            \midrule
            \multicolumn{16}{c}{\textbf{Training methods from scratch}} \\

            MOS\cite{huang2021mos} & 70.57 & 79.38 & 72.34 & 78.05 & 74.81 & 65.95 & 73.66 & 57.79 & 70.35 & 76.78 & 86.81 & 51.09 & 74.76  & 68.17  & 94.83 \\
            ARPL\cite{chen2021adversarial} & 86.76 & 43.38 & 88.12 & 37.28 & 92.62 & 21.49 & 87.69 & 35.68 & 88.57 & 35.19 & 88.57 & 37.21 & 88.72  & 35.04  & 93.66 \\
            VOS\cite{du2022vos} & 86.57 & 61.57 & 88.84 & 52.49 & 91.56 & 35.92 & 92.18 & 31.50 & 89.68 & 46.53 & 89.90 & 47.78 & 89.79  & 45.97  & 94.31 \\ 
            CSI\cite{tack2020csi} & 88.16 & 37.57 & 90.87 & 29.74 & 92.55 & 24.41 & 95.18 & 17.56 & 90.71 & 28.95 & 89.56 & 34.76 & 91.17  & 28.83  & 91.16 \\ 
            ConfBranch\cite{devries2018learning} & 88.91 & 34.44 & 90.77 & 28.11 & 94.49 & 15.79 & 95.42 & 14.06 & 91.10 & 27.24 & 90.39 & 28.85 & 91.85  & 24.75  & 94.88 \\ 
            NPOS\cite{tao2022npos} & 88.57 & 35.71 & 90.99 & 29.57 & 92.64 & 22.96 & 98.88 & 6.41 & 94.44 & 20.80 & 90.32 & 32.19 & 92.64  & 24.61  & / \\ 
            CIDER\cite{ming2022exploit} & 89.47 & 35.60 & 91.94 & 28.61 & 93.30 & 24.76 & 98.06 & 8.04 & 93.71 & 25.05 & 93.77 & 25.03 & 93.38  & 24.52  & / \\ 
            G-ODIN\cite{hsu2020generalized} & 88.14 & 48.86 & 90.09 & 42.21 & 98.95 & \underline{4.53} & 97.76 & 10.72 & 95.02 & 27.27 & 90.31 & 43.30 & 93.38  & 29.48  & 94.70  \\
            LogitNorm\cite{wei2022mitigating} & 90.95 & 34.37 & 93.70 & 24.30 & \underline{99.14} & \underline{3.93} & 98.25 & 8.33 & 94.77 & 21.94 & \underline{94.79} & \underline{21.04} & 95.27  & 18.99  & 94.30 \\ 
            RotPred\cite{hendrycks2019using} & \underline{91.19} & \underline{34.24} & \underline{94.17} & \underline{22.04} & 97.52 & 9.24 & \underline{98.89} & \textbf{3.20} & \underline{97.30} & \underline{9.87} & 92.76 & 26.61 & \underline{95.31}  & \underline{17.53}  & \textbf{95.35} \\ 
            \midrule
            \multicolumn{16}{c}{\textbf{Tuning method}} \\
            OEST (Ours) & \underline{91.27} & \underline{33.46} & \underline{94.62} & \underline{21.57} & \textbf{99.65} & \textbf{1.83}  & \underline{99.10} & \underline{5.07} & \underline{97.87} & \underline{11.03} & \underline{94.80} & \underline{20.70} & \underline{96.22} & \underline{15.61} & \underline{95.00} \\
            OEST* (Ours) & \textbf{91.47} & \textbf{32.60} & \textbf{94.81} & \textbf{21.02} & \underline{98.98} & 4.97 & \textbf{99.28} & \underline{3.75} & \textbf{98.18} & \textbf{9.52} & \textbf{95.10} & \textbf{20.13} & \textbf{96.30} & \textbf{15.33} & 94.97 \\

            \bottomrule
        \end{tabular}
    }
\end{table*}

\subsection{Setup}
\label{sec:setup}

\subsubsection{Datasets}
We follow a common setup in the out-of-distribution detection field and mainly report results using two widely used datasets, CIFAR-10 and CIFAR-100 \cite{krizhevsky2009learning}. In terms of OOD datasets, we primarily adhere to the practices outlined in OpenOOD \cite{conf/nips/YangWZZDPWCLSDZ22:openood}.
When CIFAR-10 \cite{krizhevsky2009learning} is used as the in-distribution dataset, CIFAR-100 \cite{krizhevsky2009learning} and Tiny ImageNet (Tin) \cite{le2015tiny} are used as near OOD datasets, while MNIST \cite{lecun1998gradient}, SVHN \cite{svhn}, Textures \cite{cimpoi2014describing}, and Place365 \cite{zhou2017places} are employed as far OOD datasets.
Similarly, for CIFAR-100 \cite{krizhevsky2009learning} as the in-distribution dataset, we adopt CIFAR-10 \cite{krizhevsky2009learning} and Tiny ImageNet (Tin) \cite{le2015tiny} as near OOD datasets, and MNIST \cite{lecun1998gradient}, SVHN \cite{svhn}, Textures \cite{cimpoi2014describing}, and Place365 \cite{zhou2017places} as far OOD datasets.
For a detailed description of all datasets used, please refer to Appendix~\ref{appendix: dataset}.

Additionally, for the ablation studies, we use KMNIST \cite{clanuwat2018deep} as the in-distribution dataset, with CIFAR-10 \cite{krizhevsky2009learning} and EMNIST \cite{cohen2017emnist} serving as real outliers for tuning, and MNIST \cite{lecun1998gradient} for OOD evaluation. Moreover, we conduct further experiments with MNIST and SVHN as in-distribution datasets, the details of which can be found in Appendix~\ref{appendix: mnist&svhn}.

\subsubsection{Evaluation Metrics}
As mentioned earlier, OOD detection in practical applications has two key objectives: accurate classification of in-distribution samples and reliable detection of out-of-distribution (OOD) samples. To rigorously evaluate the effectiveness of our methods, we assess the results using three widely recognized metrics.
The first two metrics focus on OOD detection performance. The first is the Area Under the Receiver Operating Characteristic Curve (AUROC) \cite{davis2006relationship}, which provides a probabilistic measure of the likelihood that a positive sample receives a higher discriminative score than a negative one \cite{fawcett2006introduction}. AUROC serves as a comprehensive indicator of the model’s ability to differentiate between in-distribution and OOD samples. The second metric is the False Positive Rate at 95\% True Positive Rate (FPR95) \cite{liang2018odin}, which measures how often negative samples are mistakenly classified as positive. FPR95 is particularly useful for assessing the model’s reliability in scenarios where precise classification of positive samples is critical.
The third metric, in-distribution testing accuracy (ID-ACC), reflects the model’s performance on the original classification task. This metric ensures that the model maintains strong classification capabilities, which is crucial along with OOD detection.
By incorporating these three metrics, we emphasize the primary goal of OOD detection: achieving a balanced performance across both the classification task and the detection of OOD samples.

\begin{table*}[htbp]
    \centering
    \caption{OOD detection performance (\%) on CIFAR-100. All the results are average values obtained from 3 random runs. The top-1 results are in \textbf{bold}, while the second- and third-best results are \underline{underlined}.}
    \label{tab: main_CIFAR-100}
    \setlength\tabcolsep{1pt}
    \adjustbox{width=1\textwidth}{
        \begin{tabular}{l cc cc cc cc cc cc cc c}
            \toprule
            \multirow{3}{*}{\parbox{1.6cm}{Method}}
            &\multicolumn{2}{c}{CIFAR-10}		
            &\multicolumn{2}{c}{Tin}		
            &\multicolumn{2}{c}{MNIST}		
            &\multicolumn{2}{c}{SVHN}		
            &\multicolumn{2}{c}{Textures}		
            &\multicolumn{2}{c}{Places365}		
            &\multicolumn{2}{c}{Average}		
            &\multirow{3}{*}{ID ACC $\uparrow$}
            \\
            \cmidrule(lr){2-3} \cmidrule(lr){4-5}\cmidrule(lr){6-7} \cmidrule(lr){8-9} \cmidrule(lr){10-11} \cmidrule(lr){12-13} \cmidrule(lr){14-15}
            &AUROC$\uparrow$	&FPR95$\downarrow$	&AUROC$\uparrow$	&FPR95$\downarrow$	&AUROC$\uparrow$	&FPR95$\downarrow$	&AUROC$\uparrow$	&FPR95$\downarrow$	&AUROC$\uparrow$	&FPR95$\downarrow$	&AUROC$\uparrow$	&FPR95$\downarrow$	&AUROC$\uparrow$	&FPR95$\downarrow$
            \\
            \midrule
            \multicolumn{16}{c}{\textbf{Post-Hoc Inference Methods}} \\
            
            OpenMax\cite{bendale2016towards} & 74.38 & 60.17 & 78.44 & 52.99 & 76.01 & 53.82 & 82.07 & 53.20 & 80.56 & 56.12 & 79.29 & \underline{54.85} & 78.46  & 55.19  & \underline{77.25} \\ 
            MSP\cite{hendrycks2017abaseline} & 78.47 & 58.91 & 82.07 & 50.70 & 76.08 & 57.23 & 78.42 & 59.07 & 77.32 & 61.88 & 79.22 & 56.62 & 78.60  & 57.40  & \underline{77.25} \\
            TempScale\cite{guo2017calibration} & 79.02 & \textbf{58.72} & 82.79 & 50.26 & 77.27 & 56.05 & 79.79 & 57.71 & 78.11 & 61.56 & 79.80 & 56.46 & 79.46  & 56.79  & \underline{77.25} \\ 
            ODIN\cite{liang2018odin} & 78.18 & 60.64 & 81.63 & 55.19 & 83.79 & 45.94 & 74.54 & 67.41 & 79.33 & 62.37 & 79.45 & 59.71 & 79.49  & 58.54  & \underline{77.25} \\ 
            MLS\cite{hendrycks2022scaling} & \underline{79.21} & \underline{59.11} & 82.90 & 51.83 & 78.91 & 52.95 & 81.65 & 53.90 & 78.39 & 62.39 & 79.75 & 57.68 & 80.14  & 56.31  & \underline{77.25} \\ 
            EBO\cite{liu2020energy} & 79.05 & 59.21 & 82.76 & 52.03 & 79.18 & 52.62 & 82.03 & 53.62 & 78.35 & 62.35 & 79.52 & 57.75 & 80.15  & 56.26  &  \underline{77.25} \\ 
            GEN\cite{liu2023gen} & \textbf{79.38} & \underline{58.87} & \underline{83.25} & 49.98 & 78.29 & 53.92 & 81.41 & 55.45 & 78.74 & 61.23 & 80.28 & 56.25 & 80.23  & 55.95  & \underline{77.25} \\ 
            ReAct\cite{sun2021react} & 78.65 & 61.30 & 82.88 & 51.47 & 78.37 & 56.04 & 83.01 & 50.41 & 80.15 & 55.04 & 80.03 & \underline{55.30} & 80.52  & 54.93  &  \underline{77.25} \\ 
            KNN\cite{sun2022out} & 77.02 & 72.80 & \underline{83.34} & \underline{49.65} & 82.36 & 48.58 & 84.15 & 51.75 & 83.66 & 53.56 & 79.43 & 60.70 & 81.66  & 56.17  & \underline{77.25} \\ 
            RMDS\cite{ren2021simple} & 77.75 & 61.37 & 82.55 & \underline{49.56} & 79.74 & 52.05 & 84.89 & 51.65 & 83.65 & 53.99 & \textbf{83.40} & \textbf{53.57} & 82.00  & 53.70  & \underline{77.25} \\
            \midrule
            
            \multicolumn{16}{c}{\textbf{Training methods from scratch}} \\
            
            CSI\cite{tack2020csi} & 69.50  & 72.62 & 73.40  & 67.90  & 51.79 & 80.54 & 80.24 & 67.21 & 62.22 & 90.51  & 70.99 & 69.41 & 68.02  & 74.70  & 61.60  \\ 
            ConfBranch\cite{devries2018learning} & 68.80  & 74.56 & 74.41 & 65.86  & 74.29 & 55.95 & 65.51 & 76.01 & 65.39 & 85.43 & 70.42 & 69.90 & 69.80  & 71.29  & 76.59 \\ 
            ARPL\cite{chen2021adversarial} & 73.38 & 64.84 & 76.50  & 58.27  & 73.77 & 59.12 & 76.45 & 59.76 & 69.93 & 71.66  & 74.62 & 62.01 & 74.11  & 62.61  & 70.70  \\ 
            CIDER\cite{ming2022exploit} & 67.55 & 82.71 & 78.65 & 61.33  & 68.14 & 75.32 & \underline{97.17} & \underline{17.82} & 82.21 & 54.43 & 74.43 & 69.30 & 78.03  & 60.15  & / \\ 
            MOS\cite{huang2021mos} & 78.54 & 60.60 & 82.26 & 51.49 & 80.68 & 52.70 & 81.59 & 56.33 & 79.92 & 61.24 & 78.50 & 58.86 & 80.25  & 56.87  & \underline{76.98} \\
            LogitNorm\cite{wei2022mitigating} & 74.57 & 73.88 & 82.37 & 51.89 & 90.69 & 34.12 & 82.80  & 47.52 & 72.37 & 77.38  & 80.25 & 55.44 & 80.51  & 56.71  & 76.34 \\ 
            NPOS\cite{tao2022npos} & 75.37 & 72.50 & 81.32 & 54.21 & 73.26 & 66.98 & 92.43 & 30.67 & \underline{85.55} & 47.39 & 77.92 & 59.47 & 80.98  & 55.20  & / \\ 
            VOS\cite{du2022vos} & \underline{79.14} & 59.23 & 82.73 & 51.89 & 82.29 & 48.56 & 84.23 & 47.23 & 78.41 & 62.55 & \underline{80.34} & 56.44 & 81.19  & 54.32  & 77.20 \\
            G-ODIN\cite{hsu2020generalized} & 73.04 & 78.82 & 81.26 & 56.34 & \underline{91.15} & \underline{27.19} & 83.74 & 42.68 & \underline{89.62} & \textbf{35.83} & 78.17 & 65.03 & 82.83  & 50.98  & 74.46 \\ 
            RotPred\cite{hendrycks2019using} & 71.11 & 72.00 & 81.75 & 53.17 & \underline{93.10}  & \underline{22.77} & \underline{95.39} & \underline{15.64} & 88.16 & 40.03 & 76.95 & 59.56 & \underline{84.41}  & \underline{43.86}  & 76.03 \\ 
            
            \midrule
            \multicolumn{16}{c}{\textbf{Tuning method}} \\
            OEST (Ours) & 77.43 & 63.35 & 82.71 & 52.20 & 90.75 & 33.24 & 88.54 & 36.18 & 81.44 & 55.12 & 79.03 & 58.58 & \underline{83.32} & \underline{49.78} & 76.87 \\
            OEST* (Ours) & 75.22 & 69.94 & \textbf{85.75} & \textbf{47.29} & \textbf{95.68} & \textbf{19.39} & \textbf{98.56} & \textbf{8.04} & \textbf{90.32} & \underline{36.30} & \underline{82.63} & \underline{54.51} & \textbf{88.03} & \textbf{39.25} & \textbf{77.63} \\
            \bottomrule
        \end{tabular}
    }
\end{table*}

\subsubsection{Training Details} 
We primarily report results obtained by using ResNet\cite{he2016deep}.
For the CIFAR-10 and CIFAR-100 datasets, we use the trained ResNet-18 model provided by \cite{zhang2023openood}, which is trained with SGD optimizer using a learning rate of 0.1, momentum of 0.9, and weight decay of $5 \times 10^{-4}$ for 100 epochs.
We further tune the trained model for an additional 10 epochs, again using the SGD optimizer. 
For CIFAR-10, we apply six augmentations—cutout, blur, noise, rotation, permutation, and sobel—comprising a mix of geometric and appearance transformations to generate peripheral-distribution data. 
The ratio of in-distribution to peripheral-distribution data is set to 1:1, with a batch size of 128 for the in-distribution data.
For CIFAR-100, in addition to these six augmentations, we also apply a seventh augmentation, \emph{RandAugment}, a more diverse transformation technique from the PyTorch library. 
This inclusion provides a wider range of transformations, further enriching the peripheral-distribution data for CIFAR-100. 
In this case, the ratio of in-distribution to peripheral-distribution data is set to 1:2, with a batch size of 128 for the in-distribution data.
For OEST, we apply a weight of \( \alpha = 0.01 \) for \( L_{\text{energy}} \), with the margin values \( m_{\text{in}} \) and \( m_{\text{pre}} \) set to \(-25\) and \(-7\), respectively, following the setup in \cite{liu2020energy}. 
The learning rate follows a cosine annealing schedule, starting at \( 1 \times 10^{-4} \) and gradually decaying to \( 1 \times 10^{-8} \) throughout tuning.
For OEST*, we use weights of \( \alpha = 0.2 \) and \( \beta = 10 \) for \( L_{\text{energy*}} \). 
The learning rate also follows a cosine annealing schedule, starting at \( 1 \times 10^{-3} \) and gradually decaying to \( 1 \times 10^{-7} \) during tuning.
Specifically, for our methods, we further tune the three trained models provided by \cite{zhang2023openood} using a random seed of 1 to ensure fairness and reproducibility in our comparisons.
Additional ablation studies on other hyper-parameters are provided in the subsequent subsection.

\subsection{Main Results}
\label{sec:results}

To thoroughly assess the performance of the proposed method across various scenarios, we conduct a rigorous comparison not only against baseline models but also against several algorithms that have gained recognition in recent years. This comprehensive evaluation allows us to gain a deeper understanding of the proposed method.
We select methods from the best average detection performance under a unified evaluation benchmark \cite{conf/nips/YangWZZDPWCLSDZ22:openood}. The results are presented in Table~\ref{tab: main_CIFAR-10} and Table~\ref{tab: main_CIFAR-100}.
It should be noted that certain approaches, such as PixMix \cite{hendrycks2022pixmix}, are not directly comparable to ours, as they leverage a manually curated set of auxiliary images for mixing.

\begin{table*}[!tbp]
    \centering
    \caption{The comparison with different single data augmentation by each simple transformations about AUROC (\%) when CIFAR-10 is the given in-distribution. The first model is the trained classifier, the last one is the model further tuned with the composition of all considered data augmentations. The top-1 results are in \textbf{bold}, while the second-best results are \underline{underlined}. }
    \label{tab: data augmentation}
    \setlength\tabcolsep{1pt}
    \resizebox{\textwidth}{!}{
    \begin{tabular}{c@{\hspace{8pt}} cc cc cc cc cc cc cc}
        \toprule
           \multirow{2}{*}{\makecell[c]{Simple\\Transform}}
        &\multicolumn{2}{c}{CIFAR-100}		
        &\multicolumn{2}{c}{Tin}		
        &\multicolumn{2}{c}{MNIST}		
        &\multicolumn{2}{c}{SVHN}		
        &\multicolumn{2}{c}{Textures}		
        &\multicolumn{2}{c}{Places365}			
        &\multicolumn{2}{c}{Average}	\\
        \cmidrule(lr){2-3} \cmidrule(lr){4-5}\cmidrule(lr){6-7} \cmidrule(lr){8-9} \cmidrule(lr){10-11} \cmidrule(lr){12-13} \cmidrule(lr){14-15}
         &AUROC$\uparrow$	&FPR95$\downarrow$	&AUROC$\uparrow$	&FPR95$\downarrow$	&AUROC$\uparrow$	&FPR95$\downarrow$	&AUROC$\uparrow$	&FPR95$\downarrow$	&AUROC$\uparrow$	&FPR95$\downarrow$	&AUROC$\uparrow$	&FPR95$\downarrow$	&AUROC$\uparrow$	&FPR95$\downarrow$
        \\
        \midrule
        EBO \cite{liu2020energy} & 86.36 & 66.60 & 88.80 & 56.08 & 94.32 & 24.99 & 91.79 & 35.12 & 89.47 & 51.82 & 89.25 & 54.85 & 90.00  & 48.24 \\
        Sobel & 90.68 & 43.83 & 90.23 & 41.06 & \underline{99.01} & \underline{4.38} & 93.73 & 38.43 & 91.34 & 39.34 & 92.02 & 36.04 & 93.00  & 33.85 \\ 
        Blur & 90.04 & 40.88 & 92.20 & 30.71 & 97.17 & 12.49 & \textbf{99.77} & \textbf{0.61} & 92.17 & 34.00 & 91.97 & 31.84 & 93.89 & 25.09 \\
        Noise & 83.68 & 61.84 & 89.97 & 44.06 & 85.50 & 50.39 & 87.84 & 38.17 & 93.06 & 33.16 & 88.13 & 50.68 & 88.03 & 46.38 \\
        Cutout & 89.49 & 43.21 & 92.77 & 30.06 & \textbf{99.94} & \textbf{0.14} & 93.29 & 19.33 & 93.76 & 25.92 & 94.77 & 21.09 & 94.00 & 23.29 \\
        Perm & 91.10 & 34.13 & \underline{94.66} & \underline{22.89} & 96.93 & 12.71 & 96.87 & 12.31 & 97.60 & 11.42 & \textbf{96.45} & \textbf{16.04} & 94.00 & 23.29 \\ 
        Rotation & \textbf{91.49} & \underline{32.79} & 94.41 & 22.97 & 97.45 & 10.41 & 99.17 & 4.36 & \underline{97.70} & \underline{10.34} & 93.76 & 26.00 & \underline{95.83} & \underline{16.18} \\
        OEST* & \underline{91.47} & \textbf{32.60} & \textbf{94.81} & \textbf{21.02} & 98.98 & 4.97 & \underline{99.28} & \underline{3.75} & \textbf{98.18} & \textbf{9.52} & \underline{95.10} & \underline{20.13} & \textbf{96.30} & \textbf{15.33} \\

        \bottomrule
    \end{tabular}
    }
\end{table*}

\begin{table}[tb]
    \centering
    \caption{AUROC (\%) performance comparison between peripheral-distribution samples and real outliers. The top-1 results are in \textbf{bold}, while the second-best results are \underline{underlined}.}
    \label{tab: problem of external dataset}
    \begin{tabular}{cccc}
    \toprule
    $\id$ & \makecell[c]{External\\dataset?} & $\oodimage$ or $\mathcal{X}_\text{PD}$  & MNIST \\
    \midrule
    \multirow{4}{6em}{KMNIST} & \XSolidBrush & EBO \cite{liu2020energy} & 89.6 \\
    & \CheckmarkBold & CIFAR-10 & 91.4 \\
    & \CheckmarkBold & EMNIST & \textbf{99.8} \\
    & \XSolidBrush & OEST* (Ours) & \underline{98.6} \\
    \bottomrule
    \end{tabular}
\end{table}

\subsubsection{CIFAR-10 as ID} 

Table~\ref{tab: main_CIFAR-10} presents the OOD detection performance for CIFAR-10 as the in-distribution dataset, evaluated across six out-of-distribution test datasets. 
Our methods consistently achieve the top-3 AUROC on all six OOD datasets. 
Notably, the average AUROC of OEST* reaches 96.30\%, surpassing RotPred by 0.99\%. 
Additionally, our methods also achieve the lowest FPR95 on almost OOD datasets so that OEST* achieves the lowest average FPR95 (15.33\%).
The outstanding average performance on both AUROC and FPR95 demonstrates our methods' superior ability to maintain detection accuracy while minimizing false positives.
Moreover, with a more theoretically solid loss function, OEST* demonstrates a more balanced and superior performance compared to OEST.

Since post-hoc inference methods do not alter the model’s structure and only adjust the computation of the final scoring function, the accuracy on in-distribution samples remains unchanged.
In contrast, methods that involve training from scratch or continual tuning typically result in a decrease in ID-ACC. This reduction arises from the inherent trade-off between optimizing for out-of-distribution (OOD) detection and maintaining in-distribution accuracy. During training, Empirical Risk Minimization (ERM) focuses on minimizing classification error for in-distribution samples, while OOD detection often requires a different optimization direction. This divergence can create a mismatch between the objectives of enhancing OOD detection performance and preserving the accuracy of the original classification task, leading to a potential drop in ID-ACC. However, our tuning strategy resulted in only a 0.06\% decrease in classification accuracy for OEST and a 0.09\% decrease for OEST*—differences so small they can be considered negligible, particularly in the context of CIFAR-10 tasks.

\subsubsection{CIFAR-100 as ID}

As shown in Table~\ref{tab: main_CIFAR-100}, our approach significantly surpasses other baselines, achieving the highest average AUROC of 88.03\% (+3.62\%) and the lowest average FPR95 of 39.25\% (-3.61\%). Notably, OEST* exhibits exceptional AUROC performance on Tiny ImageNet (85.75\%), MNIST (95.68\%), SVHN (98.56\%), and Texture (90.32\%), underscoring its effectiveness across both near-OOD and far-OOD datasets.
Furthermore, our tuning process not only enhances out-of-distribution detection but also improves the original model’s classification accuracy, reflected in a higher ID-ACC score of 77.63\%. This improvement suggests that our method promotes more robust feature representations, benefiting both in-distribution classification and OOD detection.
In this context, OEST* outperforms OEST on five out of six out-of-distribution datasets, highlighting the advantages of the energy-barrier loss \( \mathcal{L}_{\text{energy*}} \). With a more rigorous theoretical foundation than the energy-bounded loss, the energy-barrier loss enables OEST* to achieve superior results, further emphasizing the necessity of accounting for \( \log Z \) during training, as it cannot be simply disregarded.
However, we acknowledge that our methods are less effective on CIFAR-10. This phenomenon is discussed in detail in Section~\ref{sec: assumption}.

\subsection{Experimental Analysis}
\label{sec:ablation}
Building on the experimental results discussed above, we conducted a series of comprehensive experiments to explore various factors of our method. More discussion can be found in Appendix~\ref{appendix: experimental analysis}.

\subsubsection{The Ablation Study for Simple Transformations}
\label{subsec:simple transformations}
As illustrated in Table~\ref{tab: data augmentation}, we have observed that almost all transformations individually improve the classifier's performance. Consequently, we decided to combine all the transformations together to evaluate their collective impact. The results confirm that the mixed version effectively enhances the performance of the trained classifier, demonstrating better comprehensive OOD detection performance in terms of both AUROC and FPR95 metrics.
Although the mixed version may not consistently outperform individual transformations in certain benchmarks, it still yields the highest overall improvement for OOD detection.

\subsubsection{Limitation of Real Outliers} 

We evaluate classifiers on KMNIST as the in-distribution dataset, with MNIST as the OOD dataset, as shown in Table~\ref{tab: problem of external dataset}. 
All models are based on the LeNet backbone; the first model is a trained classifier, and the other three are further tuned based on the first one, utilizing either external datasets or samples augmented with simple transformations (cutout, blur, noise, and permutation).
The results indicate that using CIFAR-10 as auxiliary data provides only a marginal improvement in OOD detection performance (AUROC 91.4\%). 
This is likely because KMNIST \cite{clanuwat2018deep}, EMNIST \cite{cohen2017emnist}, and MNIST \cite{lecun1998gradient} are all grayscale datasets containing handwritten images, whereas CIFAR-10 \cite{krizhevsky2009learning} comprises colorful images of various natural scenes, making it stylistically distinct from MNIST. 
By contrast, using EMNIST as auxiliary data, which has a closer resemblance in style and content to MNIST, significantly boosts performance, achieving a top AUROC of 99.8\%. 
This finding underscores the importance of choosing an appropriate auxiliary dataset in outlier-based methods and reveals how dataset selection can limit the applicability of these approaches.
Thus, for OOD detection, it is generally assumed that auxiliary OOD training data is not accessible.
Additionally, our proposed peripheral-distribution samples demonstrate performance comparable to real outliers (AUROC 98.6\%), further validating the effectiveness of our approach in the absence of specific auxiliary datasets.

\begin{table*}[tb]
    \centering
    \caption{The comparison of our method with different backbones\cite{zagoruyko2016wide} about AUROC (\%) when CIFAR-10 is the given in-distribution. \textbf{Bold} denotes the best results.}
    \label{tab:backbone}
    \setlength\tabcolsep{1pt}
    \resizebox{0.99\textwidth}{!}{
    \begin{tabular}{c@{\hspace{8pt}} c @{\hspace{8pt}}cc cc cc cc cc cc cc}
        \toprule
        \multirow{2}{*}{\makecell[c]{Backbone}}
        & \multirow{2}{*}{\makecell[c]{Further Tuned?}}
        &\multicolumn{2}{c}{CIFAR-100}		
        &\multicolumn{2}{c}{Tin}		
        &\multicolumn{2}{c}{MNIST}		
        &\multicolumn{2}{c}{SVHN}		
        &\multicolumn{2}{c}{Textures}		
        &\multicolumn{2}{c}{Places365}		
        &\multicolumn{2}{c}{Average}	\\
        \cmidrule(lr){3-4} \cmidrule(lr){5-6}\cmidrule(lr){7-8} \cmidrule(lr){9-10} \cmidrule(lr){11-12} \cmidrule(lr){13-14} \cmidrule(lr){15-16}
        & &AUROC$\uparrow$	&FPR95$\downarrow$	&AUROC$\uparrow$	&FPR95$\downarrow$	&AUROC$\uparrow$	&FPR95$\downarrow$	&AUROC$\uparrow$	&FPR95$\downarrow$	&AUROC$\uparrow$	&FPR95$\downarrow$	&AUROC$\uparrow$	&FPR95$\downarrow$	&AUROC$\uparrow$	&FPR95$\downarrow$
        \\
        \midrule
         \multirow{2}{*}{\centering Res18} & \XSolidBrush & 86.7 & 49.3 & 87.0 & 45.2 & 94.9 & 24.6 & 95.0 & 24.9 & 90.8 & 39.3 & 89.4 & 40.3 & 89.8  & 37.3  \\
        & \CheckmarkBold & \textbf{91.5} & \textbf{32.6} & \textbf{94.8} & \textbf{21.0} & \textbf{99.0} & \textbf{5.0} & \textbf{99.3} & \textbf{3.8} & \textbf{98.2} & \textbf{9.5} & \textbf{95.1} & \textbf{20.1} & \textbf{96.3} & \textbf{15.3}  \\
        \multirow{2}{*}{\centering Res50} & \XSolidBrush & 88.7 & 51.0 & 89.0 & 47.0 & 98.0 & 10.7  & 85.6 & 55.2 & 86.9 & 57.5 & 90.6 & 42.2  & 88.2  & 43.9  \\
        & \CheckmarkBold & \textbf{92.6} & \textbf{35.1} & \textbf{93.8} & \textbf{25.4} &\textbf{99.9} & \textbf{0.2}  & \textbf{99.5} & \textbf{2.6} & \textbf{97.3} & \textbf{12.5} & \textbf{94.9} & \textbf{22.1} & \textbf{96.3}  & \textbf{16.3}  \\
        \multirow{2}{*}{\centering WRN34} & \XSolidBrush & 87.8 & 44.2 & 87.5 & 41.1  & 93.6 & 28.1 & 91.9 & 39.9 & 85.7 & 42.6  & 88.1 & 39.9 & 89.1  & 39.3  \\ 
        & \CheckmarkBold & \textbf{93.2} & \textbf{38.4} & \textbf{93.6}
        & \textbf{30.5} & \textbf{99.8} & \textbf{0.2}  & \textbf{99.8} & \textbf{1.0}  & \textbf{97.7} & \textbf{14.6} & \textbf{95.4} & \textbf{23.8}      & \textbf{96.4}  & \textbf{18.1} \\
 
        \bottomrule
    \end{tabular}
    }
\end{table*}

\subsubsection{Ablation Study on Different Backbones}

We perform ablation studies using various backbone architectures, specifically ResNet18, ResNet50, and WideResNet34. As shown in Table~\ref{tab:backbone}, we present the AUROC values for each backbone. The results demonstrate that our method consistently enhances the performance of the trained classifier, regardless of the backbone architecture used. Additionally, we observed that using WideResNet as the backbone yielded the best AUROC performance, which aligns with the general understanding of the neural network’s capacity and expressiveness.

\subsubsection{Assumption Validity and the Influence of Backbone Strength}
\label{sec: assumption}

In the analysis presented in Table~\ref{tab: main_CIFAR-100}, we observe a marginal decline in our model’s performance on CIFAR-10 test metrics, particularly in comparison to results achieved solely using the trained model. 
We attribute this decline to a deviation from our initial Energy Barrier Assumption on Peripheral-Distribution (\Cref{ass:periphery}). 
Notably, due to the stylistic similarity between CIFAR-10 and CIFAR-100, a result of their similar data collection methods, certain augmented samples in our peripheral distribution may inadvertently overlap with CIFAR-10 samples. 
This overlap challenges our core assumption that a sufficiently large energy barrier exists to clearly differentiate ID data from peripheral-distribution samples and OOD samples.

To investigate this hypothesis, we conducted a series of experiments using ResNet architectures with varying depths. The results reveal that a stronger backbone mitigates the observed performance drop, producing notable improvements in both AUROC and FPR95 metrics. 
For example, switching from ResNet18 to ResNet34 yields a 10.1\% increase in AUROC and a 24.87\% reduction in FPR95. When using ResNet50, these improvements become even more substantial: a 12.29\% increase in AUROC and a 57.95\% decrease in FPR95.
These outcomes suggest that more powerful feature extraction allows the classifier to create a more compact clustering of ID representations, thereby enhancing the reliability of peripheral samples generated from ID data.
This observation underscores the necessity of a strong feature extraction backbone to maintain the energy barrier, reinforcing the criticality of our initial assumption.

\begin{table}[tb]
\centering
\caption{OOD Detection Performance (\%) of CIFAR-100 Classifier with Different Backbones on CIFAR-10}
\label{tab:assupmtion}
\begin{tabular}{cccc} 
\toprule
Backbone & Method & AUROC $\uparrow$ & FPR95 $\downarrow$ \\
\midrule
\multirow{2}{*}{ResNet18}
    & EBO\cite{liu2020energy}   & 79.05 & 59.21 \\
    & OEST* (Ours)  & 75.22\textsubscript{(-3.83)} & 69.94\textsubscript{(+10.73)}  \\
\midrule
\addlinespace[0.5ex]
\multirow{2}{*}{ResNet34}
    & EBO\cite{liu2020energy}   & 79.43 & 82.41 \\
    & OEST* (Ours)  & 89.53\textsubscript{(+10.10)}  & 57.54\textsubscript{(-24.87)} \\
\midrule
\addlinespace[0.5ex]
\multirow{2}{*}{ResNet50}
    & EBO\cite{liu2020energy}   & 82.27 & 91.35 \\
    & OEST* (Ours)  & 94.56\textsubscript{(+12.29)} & 33.38\textsubscript{(-57.95)} \\
\bottomrule
\end{tabular}
\end{table}

\section{Conclusion}
\label{sec:conclu}

In this work, we introduce an out-of-distribution (OOD) detection framework, OEST, which leverages the principles beneath energy-based models (EBMs) to enhance classifier robustness without substantial reliance on expensive real outlier data. 
Specifically, we generate peripheral-distribution data to offer a practical and theoretically sound solution for OOD detection; 
by employing peripheral-distribution data, OEST builds an energy barrier around in-distribution samples, consequently distinguishing them from OOD samples through a spectrum of data transformations. 
In contrast to training-based methods (\cf~\Cref{sec:2.2}), OEST solely further tunes trained models and allows efficient deployment without substantial computational demands.
Furthermore, we devise the energy-barrier loss to displace the energy-bounded loss in \cite{wu2023oest} (inducing the advanced version, OEST*), provide statistical guarantee under the EBM framework, 
and successfully improve OOD detection performance across various benchmarks.
Our experiments show that OEST* consistently outperforms baseline models across various tasks. 
We are confident that OEST will pave the way to new out-of-distribution detection and open-world object detection.

\section*{Acknowledgments}
This work is supported by the Shanghai Engineering Research Center of Intelligent Computing System (No. 19DZ2252600) and the Research Grants Council (RGC) under grant \texttt{ECS-22303424}. 
The authors also thank Prof.\ Cheng Jin for providing the computational resources that significantly contributed to the success of this research.

\appendices

\section{Proof of \texorpdfstring{\Cref{thm:periphery}}{thm:periphery}}
\label{sec:proof}

\begin{theoremappendix}
When \Cref{ass:periphery} holds, we then have
$$
E(\mtx x'; f) - E(\mtx x; f) > \gamma_\alpha
$$
holds with probability $1-\alpha$.
The out-of-distribution sample $\mtx x'$ will be guaranteed to have higher energy than a random ID sample $\mtx x$ with high probability.
\end{theoremappendix}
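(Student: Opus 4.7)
The plan is to bridge the guaranteed energy barrier at $\mtx x^+$ to an energy gap at $\mtx x'$ by exploiting the fact that $E(\cdot; f)$ inherits Lipschitz regularity from the bounded class representations $\mtx c_i$. The assumption gives us a one-sided control of $E(\mtx x^+; f) - E(\mtx x; f)$ with a slack that conveniently scales as $B \|\mtx x' - \mtx x^+\|$; the tell-tale factor $B$ strongly suggests that the correct route is to absorb the $\|\mtx x' - \mtx x^+\|$-penalty by a Lipschitz transfer from $\mtx x^+$ to $\mtx x'$, leaving exactly the constant $\gamma_\alpha$ behind.

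First I would establish that the Helmholtz free energy $E(\mtx x; f) = -\log \sum_i \exp\bigl(f^{(i)}(\mtx x)\bigr)$ with $f(\mtx x) = \mtx C \mtx x$ is $B$-Lipschitz in $\mtx x$. A direct computation gives
\begin{equation*}
\nabla_{\mtx x} E(\mtx x; f) = -\sum_{i=1}^{C} p_i(\mtx x)\, \mtx c_i,
\qquad p_i(\mtx x) = \frac{\exp\bigl(f^{(i)}(\mtx x)\bigr)}{\sum_j \exp\bigl(f^{(j)}(\mtx x)\bigr)},
\end{equation*}
so $\nabla_{\mtx x} E$ is a convex combination of the class vectors $\mtx c_i$. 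Since every $\mtx c_i$ lies in a ball of radius $B$ by \Cref{ass:periphery}, the gradient norm is bounded by $B$, and hence for any two points $\mtx u, \mtx v$ we get $\lvert E(\mtx u; f) - E(\mtx v; f)\rvert \leq B \|\mtx u - \mtx v\|$.

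Next, I would apply this Lipschitz bound with $\mtx u = \mtx x^+$ and $\mtx v = \mtx x'$ to obtain
\begin{equation*}
E(\mtx x'; f) \;\geq\; E(\mtx x^+; f) - B\|\mtx x' - \mtx x^+\|.
\end{equation*}
Subtracting $E(\mtx x; f)$ from both sides and then invoking \Cref{ass:periphery}, which holds with probability $1-\alpha$, yields
\begin{equation*}
E(\mtx x'; f) - E(\mtx x; f) \;\geq\; \bigl(E(\mtx x^+; f) - E(\mtx x; f)\bigr) - B\|\mtx x' - \mtx x^+\| \;>\; \gamma_\alpha,
\end{equation*}
which is exactly the claim. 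The probability statement is carried verbatim from the assumption, so no union bound is needed.

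I expect the Lipschitz step to be the only nontrivial piece, and even there the main subtlety is just recognizing that the softmax-weighted average of the $\mtx c_i$'s keeps the gradient inside the same radius-$B$ ball rather than inflating it by a factor of $C$ or $\sqrt{C}$. Once the Lipschitz constant is nailed at $B$, the rest is purely algebraic cancellation, and the $B\|\mtx x' - \mtx x^+\|$ term in the assumption is seen to be engineered precisely to absorb the Lipschitz slack.
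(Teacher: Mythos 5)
Your proof is correct, and it follows the same overall decomposition as the paper: insert the augmented sample $\mtx x^+$, control the transfer term $E(\mtx x'; f) - E(\mtx x^+; f)$ from below by $-B\|\mtx x' - \mtx x^+\|$, and let the assumption's built-in slack $B\|\mtx x' - \mtx x^+\|$ cancel it exactly. The only genuine difference is how that transfer inequality is established. The paper argues discretely: it writes $E(\mtx x'; f) - E(\mtx x^+; f)$ as $T$ times the log of a ratio of two sums of exponentials, observes that such a ratio is a weighted average of the termwise ratios $\exp(\dotp{\mtx x^+}{\mtx c_i}/T)/\exp(\dotp{\mtx x'}{\mtx c_i}/T)$ and hence no smaller than the smallest of them, and then applies Cauchy--Schwarz to $\dotp{\mtx x^+ - \mtx x'}{\mtx c_i}$ for that minimizing index. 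You instead prove that the free energy is globally $B$-Lipschitz by noting its gradient is the negative softmax-weighted convex combination of the $\mtx c_i$'s, whose norm is at most $\max_i \|\mtx c_i\| \leq B$. Both routes yield the same constant $B$; yours is arguably the more standard and reusable statement (a two-sided Lipschitz bound on log-sum-exp, applicable to any pair of points), while the paper's is more elementary in that it avoids calculus and the mean-value inequality entirely. Your closing remark is also on point: the convexity of the softmax weights is precisely what prevents the constant from inflating by a factor depending on $C$, and the probability statement indeed transfers verbatim from \Cref{ass:periphery} since the final step is a deterministic implication on the event where the assumption's inequality holds.
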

\begin{proof}
Inserting the augmented sample $\mtx x^+$, we first reformulate the target gap $E(\mtx x'; f) - E(\mtx x; f)$ as
\begin{align*}
\paren{E(\mtx x'; f) - E(\mtx x^+; f)} + \paren{E(\mtx x^+; f) - E(\mtx x; f)}.
\end{align*}
It then suffices to bound $E(\mtx x'; f) - E(\mtx x^+; f)$ from below, considering in \Cref{ass:periphery} we already have
\begin{align*}
E(\mtx x^+; f) - E(\mtx x; f) > B \cdot \|\mtx x' - \mtx x^+\| + \gamma_\alpha.
\end{align*}
We expand $E(\mtx x'; f) - E(\mtx x^+; f)$ as
\begin{align}
T\cdot\log \brkt{\frac{\sum_{i=1}^{C}\exp\paren{\dotp{\mtx x^+}{\mtx c_i}/T}}{\sum_{i=1}^{C}\exp\paren{\dotp{\mtx x'}{\mtx c_i}/T}}}.
\label{eqn:target}
\end{align}
For the fraction of the form $\paren{\sum a_i} / \paren{\sum b_i}$, we notice
\begin{align*}
\frac{\sum_{i=1}^C a_i}{\sum_{i=1}^C b_i} = \sum_i \paren{\frac{a_i}{b_i} \cdot \frac{b_i}{\sum_{j=1}^C b_j}},
\end{align*}
which indicates the internal fraction in Eq.~\eqref{eqn:target} is as well a weighted sum of the following positive terms 
$$\exp\paren{\dotp{\mtx x^+}{\mtx c_i}/T} / \exp\paren{\dotp{\mtx x'}{\mtx c_i}/T},$$
implying the fraction is no lower than the term above for a certain $i \in [C]$.
We thus have
\begin{align*}
E(\mtx x'; f) - E(\mtx x^+; f) &\geq T\cdot\log \frac{\exp\paren{\dotp{\mtx x^+}{\mtx c_i}/T}}{\exp\paren{\dotp{\mtx x'}{\mtx c_i}/T}} \\
&= \dotp{\mtx x^+ - \mtx x'}{\mtx c_i} \\
&\geq - \|\mtx x^+ - \mtx x'\| \|\mtx c_i\| \\
&\geq - B \cdot \|\mtx x^+ - \mtx x'\|.
\end{align*}
Combining \Cref{ass:periphery}, we can attain the claim in \Cref{thm:periphery} and the proof is complete.
\end{proof}

\bibliographystyle{IEEEtran}
\bibliography{IEEE}

\newpage

\begin{IEEEbiographynophoto}{Yifan Wu} (Student Member, IEEE) 
received the B.E. degree in intelligent science and technology from the School of Computer Engineering and Science, Shanghai University, Shanghai, China, in 2024. He is currently a research assistant at the School of Computer Science, Fudan University, Shanghai, China. His research interests include anomaly detection, out-of-distribution detection, and noisy label learning.
\end{IEEEbiographynophoto}

\begin{IEEEbiographynophoto}{Xichen Ye}
received the B.E. degree in computer science and technology with the School of Information Science and Technology, Hangzhou Normal University, Zhejiang, China, in 2021, and the M.E. degree in computer science and technology with the School of Computer Engineering and Science, Shanghai University, Shanghai, China, in 2024. He is currently a research assistant at the School of Computer Science, Fudan University, Shanghai, China. His research interests include robust machine learning within the field of computer vision.
\end{IEEEbiographynophoto}

\begin{IEEEbiographynophoto}{Songmin Dai}
received the B.E. degree in applied physics from the College of Sciences, Shanghai University, Shanghai, China, in 2016, and the Ph.D. degree in computer science and technology from the School of Computer Engineering and Science, Shanghai University, Shanghai, China, in 2021. He is currently a researcher with the Hithink RoyalFlush AI Research Institute, Hangzhou, China. His research interests include unsupervised anomaly detection and generation models within the field of computer vision.
\end{IEEEbiographynophoto}

\begin{IEEEbiographynophoto}{Dengye Pan}
received the B.E. degree from Qian Weichang College, Shanghai University, Shanghai, China, in 2022, and is currently pursuing the M.S. degree in computer science and technology with the School of Computer Engineering and Science, Shanghai University, Shanghai, China. Her research interests include out-of-distribution detection within the field of computer vision.
\end{IEEEbiographynophoto}

\begin{IEEEbiographynophoto}{Xiaoqiang Li} (Member, IEEE)
received the Ph.D. degree in computer science from Fudan University, Shanghai, China, in 2004. He is currently an Associate Professor of computer science with Shanghai University, China. He is currently an ACM Member and a Senior Member of the Chinese Computer Society. He is the Deputy Director of the Multimedia Special Committee of the Shanghai Computer Society. His current research interests include image processing, pattern recognition, computer vision, and machine learning. He has published over 100 conference and journal papers in these areas, including CVPR, ICCV, Nerurips, IEEE TCSVT, IEEE TMM, IEEE TII, IEEE TCYB, IEEE TIP, \etc
\end{IEEEbiographynophoto}

\begin{IEEEbiographynophoto}{Weizhong Zhang}
received the B.S. and Ph.D. degrees from Zhejiang University in 2012 and 2017, respectively. He is currently a tenure-track Professor with the School of Data Science, Fudan University. His research interests include sparse neural network training, robustness, and out-of-distribution generalization. He has published over 40 conference and journal papers in these areas, including ICML, Neurips, ICLR, JMLR, IEEE TPAMI, IEEE TKDE, IEEE TIP, IEEE TIT, \etc
\end{IEEEbiographynophoto}

\begin{IEEEbiographynophoto}{Yifan Chen}
received the B.S. degree from Fudan University, Shanghai, China, in 2018, and the PhD degree in Statistics from University of Illinois Urbana-Champaign in 2023. He is currently an assistant professor in computer science and math at Hong Kong Baptist University. He is broadly interested in developing efficient machine learning algorithms, encompassing both statistical and deep learning models. He has published several papers in these areas, including ICML, Neurips, KDD, \etc
\end{IEEEbiographynophoto}

\vfill

\clearpage
\onecolumn
\pagestyle{plain}  %
\setcounter{page}{1}  %

\begin{center}
\setlength{\baselineskip}{1.6\baselineskip}
{\LARGE\bf Supplementary Material for ``\mytitle''}\\
\bigskip
\end{center}

\section{Supplementary Algorithm for OOD Detection}
\label{appendix: method}

In this section, we provide a supplementary description of the OOD detection process using the parameterized energy function as detailed in the main paper. Algorithm~\ref{algo:ood} outlines the steps to compute the energy-based score for a given test image sample and to determine its in- or out-of-distribution status based on a predefined threshold.

\begin{algorithm}[!h]
\caption{OOD Detection Using the Parameterized Energy Function}
\label{algo:ood_detection}
\begin{algorithmic}[1]
\REQUIRE Classifier \( f_\theta \) with parameters \( \theta \), temperature \( T \), and threshold \( \tau \)
\STATE \textbf{Input:} Test image sample \( \mtx{x} \)
\STATE Compute the logits from the fully connected layer of \( f_\theta \), denoted as \( f_\theta^{(i)}(\mtx{x}) \) for each class \( i = 1, \dots, C \)
\STATE Compute the energy function \( E(\mtx{x}; f_\theta) \) using the logits:
\[
    E(\mtx{x}; f_\theta) = - \log \left( \sum_{i=1}^C \exp  f_\theta^{(i)}(\mtx{x})  \right)
\]
\STATE Define the score function \( s_\theta(\mtx{x}) \) as the negative energy:
\[
    s_\theta(\mtx{x}) = -E(\mtx{x}; f_\theta)
\]
\STATE Set the OOD discriminator \( D(\mtx{x}; \tau, f_\theta) \) as:
\[
    D(\mtx{x}; \tau, f_\theta) = 
    \begin{cases}
        1, & \text{if } s_\theta(\mtx{x}) > \tau \\
        0, & \text{if } s_\theta(\mtx{x}) \leq \tau
    \end{cases}
\]
\STATE \textbf{Output:} \( D(\mtx{x}; \tau, f_\theta) \), where \( D = 1 \) indicates OOD, and \( D = 0 \) indicates ID
\end{algorithmic}
\label{algo:ood}
\end{algorithm}

\section{Experiments}
\label{appendix: experiments}

\subsection{Datasets}
\label{appendix: dataset}
In this section, we provide a detailed description of all the datasets used in our experiments:

\textbf{CIFAR-10} \cite{krizhevsky2009learning}: A dataset of 60,000 color images in 10 classes, with 50,000 images used for training and 10,000 for testing. Each image is 32x32 pixels.

\textbf{CIFAR-100} \cite{krizhevsky2009learning}: A dataset of 60,000 color images, categorized into 100 classes. It consists of 50,000 training images and 10,000 test images, with 600 images per class.

\textbf{MNIST} \cite{lecun1998gradient}: A dataset consisting of 70,000 grayscale images of handwritten digits, where 60,000 images are used for training and 10,000 for testing. Each image is 28x28 pixels.

\textbf{KMNIST} \cite{clanuwat2018deep}: The Kuzushiji-MNIST (KMNIST) dataset consists of 70,000 grayscale images of handwritten Japanese characters, spanning 10 classes. It includes 60,000 images for training and 10,000 images for testing, with each image being 28x28 pixels in size.

\textbf{SVHN} \cite{svhn}: A real-world dataset containing over 600,000 images of street view house numbers. It is split into 73,257 training images and 26,032 testing images, with an additional 531,131 extra training images. The dataset contains 10 digit classes.

\textbf{Tin} \cite{le2015tiny}: Tiny ImageNet is a popular dataset derived from the larger ImageNet dataset. It consists of 110,000 color images across 200 different classes. Each image has a resolution of 64x64 pixels, which is smaller than the original ImageNet dataset.

\textbf{Textures} \cite{cimpoi2014describing}: A dataset of texture images with various surface patterns, used for evaluating models under non-object-like OOD settings.

\textbf{Place365} \cite{zhou2017places}: A scene recognition dataset containing 1.8 million images across 365 scene categories.

\textbf{EMNIST} \cite{cohen2017emnist}: The Extended MNIST (EMNIST) dataset contains 814,255 grayscale images of handwritten characters. In our experiments, we specifically denote EMNIST as the subset of the EMNIST dataset containing only handwritten English letters. 

\textbf{FMNIST} \cite{xiao2017fashion}: The Fashion MNIST dataset contains 70,000 grayscale images of 10 different fashion items, including t-shirts, trousers, and shoes. Each image is 28x28 pixels.

\textbf{LSUN} \cite{yu2015lsun}: The LSUN dataset is a large-scale dataset designed for scene understanding tasks, containing millions of labeled images across multiple scene and object categories. For our OOD experiments, we specifically use the LSUN-Crop subset, which consists of 10 scene categories. Each LSUN image has a larger resolution (typically 256x256 pixels), but for consistency in our experiments, the images are cropped to match the format of our in-distribution datasets.

\textbf{ImageNet} \cite{deng2009imagenet}: ImageNet is a large-scale dataset containing over 1.2 million color images categorized into 1,000 classes. In our experiments, we use the ImageNet-Resize subset, where the images have been resized to 32x32 pixels for consistency with other datasets. This subset is often used for OOD detection, as it provides a broad range of natural images.

\begin{table*}[ht]
    \centering
    \caption{AUROC (\%) for OOD detection performance on MNIST and SVHN. The top-1 results are in \textbf{bold}.}
    \label{tab:main_2}
    \begin{tabular}{lccccccc}
        \toprule
        &\multicolumn{3}{c}{MNIST$\rightarrow$}&\multicolumn{4}{c}{SVHN$\rightarrow$} \\
        \cmidrule(lr){2-4}\cmidrule(lr){5-8}
        Methods & FMNIST & EMNIST & CIFAR-10 & CIFAR-10 & CIFAR-100 & LSUN-Crop & ImageNet-Resize \\
        \midrule
        baseline\cite{hendrycks2017abaseline} & 97.2 & 88.3 & 99.6 & 93.8 & 93.5 & 94.5 & 93.9\\
        CEDA \cite{hein2019relu} & 99.4 & 89.5 & \textbf{100.0} & 96.0 & 95.9 & 98.4 & 95.5\\
        ACET \cite{hein2019relu} & 99.8 & 91.2 & \textbf{100.0} & 97.3 & 97.1 & \textbf{99.7} & 97.7\\
        OEST (Ours) & \textbf{100.0} & 95.7 & \textbf{100.0} & 99.4 & \textbf{99.1} & 99.5 & \textbf{99.9}\\
        OEST* (Ours) & \textbf{100.0} & \textbf{96.1} & \textbf{100.0} & \textbf{99.5} & \textbf{99.1} & \textbf{99.7} & \textbf{99.9}\\
        \bottomrule
    \end{tabular}
\end{table*}

\subsection{Experiments on MNIST and SVHN}
\label{appendix: mnist&svhn}

To validate the broad applicability of our proposed method, we also conduct experiments utilizing MNIST \cite{lecun1998gradient} and SVHN \cite{svhn} as in-distribution datasets. 
When MNIST \cite{lecun1998gradient} is used as the in-distribution dataset, FMNIST \cite{xiao2017fashion}, EMNIST \cite{cohen2017emnist}, and CIFAR-10 \cite{krizhevsky2009learning} are adopted for OOD testing.
For SVHN \cite{svhn} as the in-distribution dataset, we evaluate using CIFAR-10 \cite{krizhevsky2009learning}, CIFAR-100 \cite{krizhevsky2009learning}, LSUN-Crop \cite{yu2015lsun}, and ImageNet-Resize \cite{deng2009imagenet} as OOD datasets.

For MNIST, we first train a LeNet model \cite{lecun1998gradient} as the pretrained model using the SGD optimizer with a learning rate of 0.01, momentum of 0.9, and a weight decay of $5 \times 10^{-4}$ for 60 epochs. 
We then fine-tune this pretrained model for an additional 10 epochs, still using the SGD optimizer. During fine-tuning, the learning rate follows a cosine annealing schedule, starting at $1 \times 10^{-4}$ and decaying gradually to $1 \times 10^{-8}$. 
Four simple transformations—noise, blur, perm, and sobel—are applied to generate the peripheral-distribution data.

For SVHN, we train a ResNet-18 model \cite{he2016deep} as the pretrained model using the SGD optimizer with a learning rate of 0.1, momentum of 0.9, and a weight decay of $5 \times 10^{-4}$ for 100 epochs. 
We then fine-tune this pretrained model for an additional 10 epochs, once again using the SGD optimizer. 
During fine-tuning, the learning rate follows a cosine annealing schedule, starting at $1 \times 10^{-4}$ and gradually decaying to $1 \times 10^{-8}$. 
To generate peripheral-distribution data, we apply six simple transformations: noise, blur, perm, rotation, and sobel.

As shown in Table~\ref{tab:main_2}, our method continues to demonstrate superior performance compared to other baselines.

\subsection{Experimental Analysis}
\label{appendix: experimental analysis}

\subsubsection{Comparison with the Contrastive Training Scheme.}
We also observed that CSI performs poorly in this case, largely due to certain image categories having insignificant appearance differences even after transformations like rotation, which leads to model confusion. CSI relies on a single transformation type and treats the transformed images as negative class samples, which may not sufficiently capture the complexities of the data.
In contrast, our method introduces several innovative improvements. By integrating multiple data transformation techniques and incorporating the concept of peripheral-distribution, our approach addresses these limitations. This is reflected in the performance gains, where we significantly outperform CSI in both AUROC and FPR95.

For the scheme of \cite{tack2020csi} and \cite{Moon2022TailoringSF}, both of them aim to figure out better data augmentation operation to generate negative samples for contrastive learning. 
Specifically, \cite{tack2020csi} proposes rotation, and \cite{Moon2022TailoringSF}proposes two novel transformations, named LoRot-I and LoRot-W. 
However, ours will not be troubled by this, because every transformation can be useful in our training scheme just as shown in Table\ref{tab: data augmentation}. 
Furthermore, considering a simple effective transformation, rotation, as CSI shown in\Cref{tab: main_CIFAR-10} and rotation in \Cref{tab: data augmentation}, our training scheme is superior to CSI in all six benchmarks, which means ours is a better scheme to make full use of it. \\

\begin{figure}[htbp]
\centering
\captionsetup[subfigure]{labelformat=parens, labelsep=space, font=scriptsize}
\subfloat[OOD detection performance under different $\alpha$]{
    \label{fig: alpha}
    \includegraphics[width=0.4\textwidth]{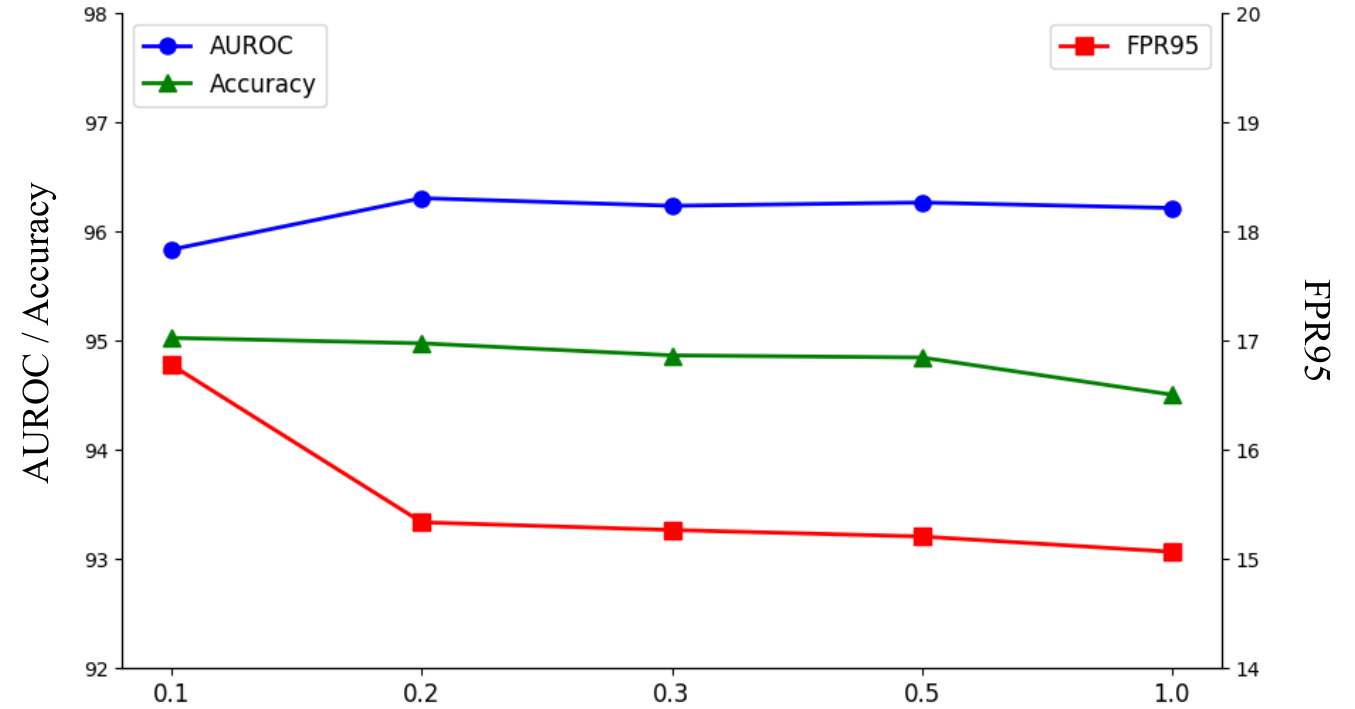}
}
\hspace{0.05\textwidth} %
\subfloat[OOD detection performance under different $\beta$]{
    \label{fig: beta}
    \includegraphics[width=0.4\textwidth]{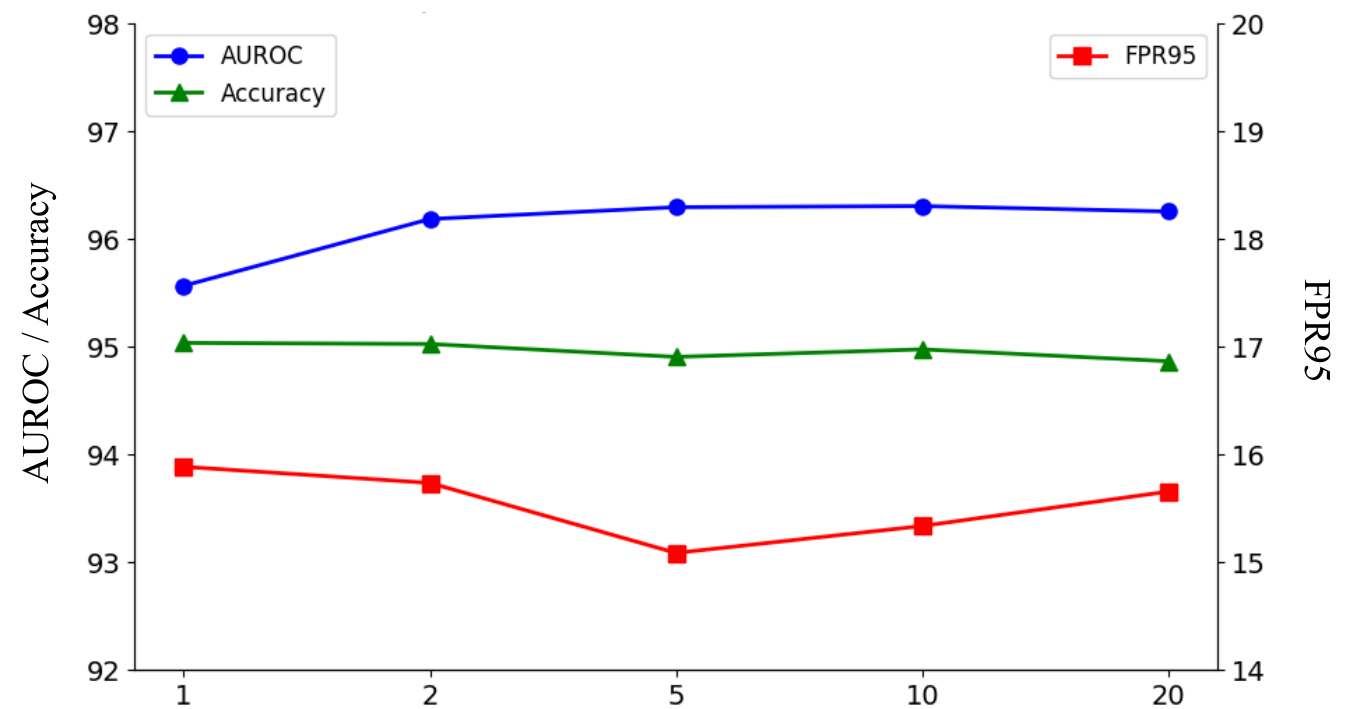}
}
\caption{
OOD detection performance of a ResNet-18 classifier trained on CIFAR-10 as the in-distribution dataset, evaluated under varying values of hyperparameters $\alpha$ and $\beta$. In (a), $\beta$ is fixed at 10, and the effect of changing $\alpha$ is shown. In (b), $\alpha$ is fixed at 0.2, and the impact of changing $\beta$ is shown. Higher AUROC and Accuracy for both experiments indicate better performance, while lower FPR95 reflects better performance.
}
\label{fig: alpha&beta}
\end{figure}

\subsubsection{Hyper-parameters Analysis}
We conducted a systematic analysis of the hyper-parameters $\alpha$ and $\beta$ to evaluate their impact on the OOD detection performance of the ResNet-18 classifier. The results are shown in Figure~\ref{fig: alpha&beta}, with AUROC and Accuracy representing the performance metrics to be maximized, and FPR95 representing the robustness metric to be minimized.

First, we analyzed the effect of $\alpha$ by fixing $\beta$ at 10, as shown in Figure~\ref{fig: alpha}. As $\alpha$ increases from 0.1 to 1.0, AUROC consistently increases and FPR95 steadily decreases, indicating that larger $\alpha$ values improve the model's ability to distinguish in-distribution (ID) and out-of-distribution (OOD) samples. However, this improvement comes at the cost of a decrease in classification accuracy for in-distribution samples, which is an undesirable side effect. To balance these competing objectives, we chose $\alpha = 0.2$ as the final value, which provides a compromise between maximizing OOD detection performance and maintaining satisfactory in-distribution accuracy.

Next, we investigated the effect of $\beta$ by fixing $\alpha$ at 0.2, as shown in Figure~\ref{fig: beta}. As $\beta$ increases from 0.05 to 10, AUROC shows an overall increasing trend, suggesting improved OOD detection capability. However, this improvement comes at the cost of a slight decrease in classification accuracy for in-distribution samples. Meanwhile, FPR95 initially decreases, reflecting enhanced robustness, but starts to increase again beyond certain values of $\beta$. To balance these effects, we chose $\beta = 10$ as the final value, which provides a reasonable trade-off between maximizing OOD detection performance and maintaining satisfactory in-distribution accuracy.

Based on these observations, we selected $\alpha = 0.2$ and $\beta = 10$ as the optimal hyper-parameter settings for our final model.

\begin{table}[htbp]
    \centering
    \caption{To verify the importance of pre-train + fine-tune process, we use CIFAR-10 as in-distribution data and rotation to generate peripheral-distribution data for fine-tune.}
    \label{tab:two_step}
    \captionsetup{skip=5pt}
    \begin{tabular}{cccccc}
    \toprule
    \centering $D^{\text{train}}_{\text{in}}$ & \makecell[c]{pre-train + fine-tune} & AUROC $\uparrow$ & FPR95 $\downarrow$  \\
    \midrule
    \multirow{2}{5em}{\centering CIFAR-10} & training from scratch & 92.1 & 31.1 \\
    & \checkmark & \textbf{96.3} & \textbf{15.3} \\
    \bottomrule
    \end{tabular}
\end{table}

\subsubsection{The Importance of Fine-Tune}
To further validate the necessity of both the pre-train and fine-tune steps, we conducted experiments as illustrated in \Cref{tab:two_step}. Compared to training from scratch, the pre-train plus fine-tune scheme yields better results, effectively enhancing the model's performance and reducing the false positive rate. 
A possible reason is that training from scratch with augmented samples together with the original samples can lead the model to prematurely learn the pattern differences between samples. However, these differences are merely low-level semantic information at the texture and color levels. In contrast, when training is divided into pre-train plus fine-tune stages, the model has already learned higher-level semantic information during the pre-train stage. This can help the model better understand the differences between the data-augmented samples and the original samples during the fine-tune stage, leading to superior out-of-distribution detection results. This experimental outcome further validates the necessity of pre-train and fine-tune in enhancing model performance.
Also, this scheme can reduces the consumption of computational resources, as we can use well-trained classifier and only need to fine-tune 10 epochs.

\subsubsection{Visualization of t-SNE}
\label{sec:t-SNE}
We performed t-SNE visualization of the features before and after fine-tune to provide a clear illustration. A shown in Figure~\ref{fig:clusted-features}, red represents the test samples of CIFAR-10, orqange represents the rotated CIFAR-10 samples, blue, purple and green represent three out-of-distribution test datasets of CIFAR-100, SVHN, and ImageNet, respectively. Feature embeddings are obtained from the ultimate convolutional layer.
We observed that rotated CIFAR-10 are located in the peripheral area of in-distribution, which supports the notion of the peripheral augmented samples lying between the in-distribution and out-of-distribution samples. Moreover, by assigning different energy scores to the rotated CIFAR-10 samples and the original samples during finetue, the decision boundaries of the classifier become much clearer and can effectively discriminate the CIFAR-100 with the highest similarity to the original distribution samples.

\end{document}